\definecolor{DarkGreen}{rgb}{0.1,0.5,0.1}
\definecolor{DarkRed}{rgb}{0.5,0.1,0.1}
\definecolor{DarkBlue}{rgb}{0.1,0.1,0.5}
\DeclareMathAlphabet{\mathpzc}{OT1}{pzc}{m}{it}
\newtheorem{theorem}{Theorem}[section]
\newtheorem*{namedtheorem}{\theoremname}
\newcommand{\theoremname}{testing}
\newtheorem{lemma}[theorem]{Lemma}
\newtheorem{claim}[theorem]{Claim}
\newtheorem{corollary}[theorem]{Corollary}
\newtheorem*{question*}{Question}
\theoremstyle{definition}
\newtheorem{definition}[theorem]{Definition}
\theoremstyle{plain}
\newtheorem{Alg}{Algorithm}
\definecolor{DarkGreen}{rgb}{0.1,0.5,0.1}
\definecolor{DarkRed}{rgb}{0.5,0.1,0.1}
\definecolor{DarkBlue}{rgb}{0.1,0.1,0.5}
\newcommand{\ignore}[1]{}
\renewcommand{\Pr}{\mathop{\bf Pr\/}}                    
\newcommand{\E}{\mathop{\bf E\/}}
\newcommand{\R}{\mathbb R}
\newcommand{\calA}{{\mathcal{A}}}
\newcommand{\calC}{{\mathcal{C}}}
\newcommand{\calK}{\mathcal{K}}
\newcommand{\calR}{\mathcal{R}}
\DeclareMathOperator{\Tr}{Tr}
\renewcommand{\floatc@ruled}[2]{\vspace{2pt}{\@fs@cfont \#1.\:} \#2 \par
 \vspace{1pt}}
\title{{\LARGE Noisy Tensor Completion via the Sum-of-Squares Hierarchy}}
\author{Boaz Barak \thanks{Harvard John A. Paulson School of Engineering and Applied Sciences. Email: {\tt b@boazbarak.org} }
\and
Ankur Moitra \thanks{
Massachusetts Institute of Technology. Department of Mathematics and the Computer Science and Artificial Intelligence Lab. Email: {\tt moitra@mit.edu}.
This work is supported in part by a grant from the MIT NEC Corporation and a Google Research Award. }}
\newcommand{\cA}{\calA}
\newcommand{\cK}{\calK}
\begin{document}
\maketitle

\begin{abstract}
In the noisy tensor completion problem we observe $m$ entries (whose location is chosen uniformly at random) from an unknown $n_1 \times n_2 \times n_3$ tensor $T$. We assume that $T$ is entry-wise close to being rank $r$. Our goal is to fill in its missing entries using as few observations as possible. Let $n = \max(n_1, n_2, n_3)$. We show that if 
$m =  n^{3/2} r$ then there is a polynomial time algorithm based on the sixth level of the sum-of-squares hierarchy for completing it. Our estimate agrees with almost all of $T$'s entries almost exactly and works even when our observations are corrupted by noise. This is also the first algorithm for tensor completion that works in the overcomplete case when $r > n$, and in fact it works all the way up to $r = n^{3/2-\epsilon}$.

Our proofs are short and simple and are based on establishing a new connection between noisy tensor completion (through the language of Rademacher complexity) and the task of refuting random constant satisfaction problems. This connection seems to have gone unnoticed even in the context of matrix completion. Furthermore, we use this connection to show matching lower bounds. Our main technical result is in characterizing the Rademacher complexity of the sequence of norms that arise in the sum-of-squares relaxations to the tensor nuclear norm.  These results point to an interesting new direction: Can we explore computational vs. sample complexity tradeoffs through the sum-of-squares hierarchy?

\end{abstract}

\thispagestyle{empty}

\newpage

\setcounter{page}{1}

\section{Introduction}

Matrix completion is one of the cornerstone problems in machine learning and has a diverse range of applications. One of the original motivations for it comes from the {\em Netflix Problem} where the goal is to predict user-movie ratings based on all the ratings we have observed so far, from across many different users. We can organize this data into a large, partially observed matrix where each row represents a user and each column represents a movie. The goal is to fill in the missing entries. The usual assumptions are that the ratings depend on only a few hidden characteristics of each user and movie and that the underlying matrix is approximately \emph{low rank}. Another standard assumption is that it is incoherent, which we elaborate on later. How many entries of $M$ do we need to observe in order to fill in its missing entries? And are there efficient algorithms for this task?

There have been thousands of papers on this topic and by now we have a relatively complete set of answers. A representative result (building on earlier works by Fazel \cite{Fa}, Recht, Fazel and Parrilo \cite{RFP}, Srebro and Shraibman \cite{SS}, Candes and Recht \cite{CR}, Candes and Tao \cite{CT}) due to Keshavan, Montanari and Oh~\cite{KMO2} can be phrased as follows: Suppose $M$ is an unknown $n_1 \times n_2$ matrix that has rank $r$ but each of its entries has been corrupted by independent Gaussian noise with standard deviation $\delta$. Then if we observe roughly
$$m = (n_1 + n_2) r \log (n_1 + n_2)$$
of its entries, the locations of which are chosen uniformly at random, there is an algorithm that outputs a matrix $X$ that with high probability satisfies
$$\mbox{err}(X) = \frac{1}{n_1 n_2} \sum_{i,j} \Big | X_{i,j}  - M_{i,j} \Big | \leq O(\delta) \;.$$
There are extensions to non-uniform sampling models \cite{LS, CBSW}, as well as various efficiency improvements \cite{JNS, Ha}. What is particularly remarkable about these guarantees is that the number of observations needed is within a logarithmic factor of the number of parameters \---- $(n_1 +n_2)r$ \---- that define the model.

In fact, there are benefits to working with even higher-order structure but so far there has been little progress on natural extensions to the tensor setting. To motivate this problem, consider the {\em Groupon Problem} (which we introduce here to illustrate this point) where the goal is to predict user-activity ratings. The challenge is that which activities we should recommend (and how much a user liked a given activity)  depends on {\em time} as well \---- weekday/weekend, day/night, summer/fall/winter/spring, etc. or even some combination of these. As above, we can cast this problem as a large, partially observed tensor where the first index represents a user, the second index represents an activity and the third index represents the time period. It is again natural to model it as being close to low rank, under the assumption that a much smaller number of (latent) factors about the interests of the user, the type of activity and the time period should contribute to the rating. How many entries of the tensor do we need to observe in order to fill in its missing entries? This problem is emblematic of a larger issue: Can we always solve linear inverse problems when the number of observations is comparable to the number of parameters in the mode, or is computational intractability an obstacle?

In fact, one of the advantages of working with tensors is that their decompositions are unique in important ways that matrix decompositions are not. There has been a groundswell of recent work that uses tensor decompositions for exactly this reason for parameter learning in phylogenetic trees \cite{MR}, HMMs \cite{MR}, mixture models \cite{HK}, topic models \cite{AFHKL} and to solve community detection \cite{AGHK}. In these applications, one assumes access to the entire tensor (up to some sampling noise). But given that the underlying tensors are low-rank, can we observe fewer of their entries and still utilize tensor methods?

A wide range of approaches to solving tensor completion have been proposed \cite{LMWY, GRY, SDS, THK, MHWG, KSV, JO, BSa, YZ}. However, in terms of provable guarantees none\footnote{Most of the existing approaches rely on computing the tensor nuclear norm, which is hard to compute \cite{Gu, HM}. The only other algorithms we are aware of \cite{JO, BSa} require that the factors be orthogonal. This is a rather strong assumption. First, orthogonality requires the rank to be at most $n$. Second, even when $r\leq n$, most tensors need to be ``whitened'' to be put in this form and then a random sample from the ``whitened" tensor would correspond to a (dense) linear combination of the entries of the original tensor, which would be quite a different sampling model. } of them improve upon the following n\"aive algorithm. If the unknown tensor $T$ is $n_1 \times n_2 \times n_3$ we can treat it as a collection of $n_1$ matrices each of size $n_2 \times n_3$. It is easy to see that if $T$ has rank at most $r$ then each of these slices also has rank at most $r$ (and they inherit incoherence properties as well). By treating a third-order tensor as nothing more than an {\em unrelated} collection of $n_1$ low-rank matrices, we can complete each slice separately using roughly $m =  n_1 (n_2 + n_3) r \log (n_2 + n_3)$
observations in total. When the rank is constant, this is a {\em quadratic} number of observations even though the number of parameters in the model is {\em linear}. 

Here we show how to solve the (noisy) tensor completion problem with many fewer observations. Let $n_1 \leq n_2 \leq n_3$. We give an algorithm based on the sixth level of the sum-of-squares hierarchy that can accurately fill in the missing entries of an unknown, incoherent $n_1 \times n_2 \times n_3$ tensor $T$ that is entry-wise close to being rank $r$ with roughly
$$m = (n_1)^{1/2} (n_2 + n_3) r \log^4 (n_1 + n_2 + n_3)$$
observations. Moreover, our algorithm works even when the observations are corrupted by noise. When $n = n_1 = n_2 = n_3$, this amounts to about $n^{1/2} r $ observations per slice which is much smaller than what we would need to apply matrix completion on each slice separately. Our algorithm needs to leverage the structure between the various slices.

\subsection{Our Results}

We give an algorithm for noisy tensor completion that works for third-order tensors. Let $T$ be a third-order $n_1 \times n_2 \times n_3$ tensor that is entry-wise close to being low rank. In particular let
\begin{equation}\label{eq:tensor}
T = \sum_{\ell = 1}^r \sigma_\ell \mbox{ } a_\ell \otimes b_\ell \otimes c_\ell + \Delta
\end{equation}
where $\sigma_\ell$ is a scalar and $a_\ell, b_\ell$ and $c_\ell$ are vectors of length $n_1$, $n_2$ and $n_3$ respectively. Here $\Delta$ is a tensor that represents noise. Its entries can be thought of as representing model misspecification because $T$ is not exactly low rank or noise in our observations or both. We will only make assumptions about the average and maximum absolute value of entries in $\Delta$. The vectors $a_\ell, b_\ell$ and $c_\ell$ are called {\em factors}, and we will assume that their norms are roughly $\sqrt{n_i}$ for reasons that will become clear later. Moreover we will assume that the magnitude of each of their entries is bounded by $C$ in which case we call the vectors $C$-incoherent\footnote{Incoherence is often defined based on the span of the factors, but we will allow the number of factors to be larger than any of the dimensions of the tensor so we will need an alternative way to ensure that the non-zero entries of the factors are spread out}. (Note that a random vector of dimension $n$ and norm $\sqrt{n}$ will be $O(\sqrt{\log n_i})$-incoherent with high probability.)
The advantage of these conventions are that a typical entry in $T$ does not become vanishingly small as we increase the dimensions of the tensor. This will make it easier to state and interpret the error bounds of our algorithm.

 Let $\Omega$ represent the locations of the entries that we observe, which (as is standard) are chosen uniformly at random and without replacement. Set $|\Omega| = m$. Our goal is to output a hypothesis $X$ that has small entry-wise error, defined as:
$$\mbox{err}(X) = \frac{1}{n_1 n_2 n_3} \sum_{i,j,k} \Big | X_{i,j,k}  - T_{i,j,k} \Big |$$
This measures the error on both the observed and unobserved entries of $T$. Our goal is to give algorithms that achieve {\em vanishing} error, as the size of the problem increases. Moreover we will want algorithms that need as few observations as possible. 
 Here and throughout let $n_1 \leq n_2 \leq n_3$ and $n = \max\{ n_1,n_2,n_3\}$. Our main result is:

\begin{theorem}[Main theorem] \label{thm:predict}
Suppose we are given $m$ observations whose locations are chosen uniformly at random (and without replacement) from a tensor $T$ of the form (\ref{eq:tensor}) where each of the factors $a_\ell, b_\ell$ and $c_\ell$ are $C$-incoherent. Let $\delta = \frac{1}{n_1 n_2 n_3} \sum_{i,j,k} | \Delta_{i,j,k}|$. And let $r^* = \sum_{\ell = 1}^r |\sigma_\ell|$. Then there is a polynomial time algorithm that outputs a hypothesis $X$ that with probability $1- \epsilon$ satisfies
$$\mbox{err}(X) \leq 4 C^3 r^* \sqrt{\frac{ (n_1)^{1/2} (n_2 + n_3) \log^4 n + \log 2/\epsilon}{ m } }\ + 2 \delta$$
provided that $\max_{i,j,k} |\Delta_{i,j,k}| \leq \sqrt{\frac{m}{\log 2/\epsilon}} \delta$.
\end{theorem}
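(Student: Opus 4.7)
The plan is to cast noisy tensor completion as constrained empirical risk minimization on the observed entries, where the constraint forces the hypothesis to be ``low rank'' in a way that is both (a) satisfied by the signal $T^\star := \sum_\ell \sigma_\ell\, a_\ell \otimes b_\ell \otimes c_\ell$ and (b) optimizable in polynomial time. Since the tensor nuclear norm is NP-hard to compute, I would substitute a level-$6$ sum-of-squares relaxation of it, call it $\|\cdot\|_K$, defined via pseudo-expectations over degree-$6$ polynomials in unit vectors $u,v,w$. By construction, $\|T^\star\|_K \leq r^\star$ (up to the incoherence/norm conventions on the $a_\ell,b_\ell,c_\ell$). The algorithm then solves the semidefinite program: minimize $\frac{1}{m}\sum_{(i,j,k) \in \Omega} |X_{ijk} - T_{ijk}|$ subject to $\|X\|_K \leq r^\star$, which is a polynomial-sized SDP.

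\textbf{From empirical to population error via Rademacher complexity.} The next step is a standard uniform convergence bound. Since $\Omega$ is a uniformly random sample (without replacement) of coordinates, for every $X$ in the feasible set
\[
\mathrm{err}(X) \;\leq\; \mathrm{err}_\Omega(X) \;+\; 2\,R_m(\calF) \;+\; O\!\left(\max_{ijk}|\ell_{ijk}(X)| \cdot \sqrt{\tfrac{\log(1/\epsilon)}{m}}\right),
\]
where $\calF = \{X : \|X\|_K \leq r^\star\}$ and $\ell_{ijk}$ is the per-entry loss. Because $T^\star \in \calF$, we have $\mathrm{err}_\Omega(X) \leq \mathrm{err}_\Omega(T^\star) \leq \delta + O(\text{concentration slack})$, and the extra hypothesis $\max_{ijk}|\Delta_{ijk}| \leq \sqrt{m/\log(2/\epsilon)}\,\delta$ is exactly what is needed for a Bernstein-type concentration to deliver the ``$+2\delta$'' term rather than a worse Hoeffding bound.

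\textbf{Bounding the Rademacher complexity via CSP refutation.} The main technical step is to show
\[
R_m(\calF) \;=\; \E\Big[\,\sup_{\|X\|_K \le r^\star} \tfrac{1}{m} \sum_{(i,j,k)\in\Omega} \boldsymbol{\sigma}_{ijk}\, X_{ijk}\Big] \;\lesssim\; r^\star \sqrt{\tfrac{n_1^{1/2}(n_2+n_3)\,\log^4 n}{m}},
\]
where $\boldsymbol{\sigma}_{ijk}$ are independent Rademacher signs. The supremum is the $K$-dual norm of a random signed $\pm 1$ tensor $\bY$ supported on $\Omega$. Here I would use the dictionary from the paper: bounding the degree-$6$ SoS value of the cubic form $\iprod{\bY, u\otimes v\otimes w}$ over unit $u,v,w$ is exactly the task of refuting a random $3$-CSP with $m$ constraints on $n$ variables. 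For this, I would follow a trace-moment computation on the pseudo-moment matrix of $\bY$, combining Matrix Bernstein with the fact that monomials in $\bY$ factor through $\Omega$ (which controls sparsity) and the $\pm 1$ signs (which give cancellation). The exponent $\nicefrac{1}{2}$ on $n_1$ is the hallmark of level-$6$ SoS refuting a $3$-ary CSP at the ``spectral'' threshold.

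\textbf{Main obstacle.} Steps 1 and 2 are essentially bookkeeping once the relaxation is chosen; the whole weight of the theorem sits in the Rademacher bound. The delicate points are (i) proving SoS-feasibility of the signal with constant exactly $r^\star$ under only the $C$-incoherence (not orthogonality) assumption, and (ii) the matrix-concentration/trace-moment argument that shows random sparse signed $3$-tensors have small level-$6$ SoS norm with the correct $n_1^{1/2}(n_2+n_3)$ scaling and only polylogarithmic overhead. After these, converting the resulting $\ell_1$ generalization bound into the stated entry-wise error $\mathrm{err}(X)$ is immediate from the definition.
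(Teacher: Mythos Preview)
Your high-level plan is the paper's plan: use the level-$6$ SoS relaxation $\|\cdot\|_{\calK_6}$ of the tensor nuclear norm, show the signal $T^\star=T-\Delta$ is feasible with norm $\le r^*$, and then control generalization by a Rademacher-complexity bound for the $\calK_6$-ball. Two small corrections are worth flagging. First, the paper optimizes the norm subject to an empirical-error constraint $\le 2\delta$ (your formulation swaps objective and constraint); the two are interchangeable for the analysis, and in both cases the hypothesis $\max_{ijk}|\Delta_{ijk}|\le \sqrt{m/\log(2/\epsilon)}\,\delta$ is used exactly where you say, to concentrate the empirical loss of $T^\star$ around $\delta$. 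Second, your ``delicate point (i)'' is not delicate: since $\calK_6$ relaxes the (incoherent) tensor nuclear norm and each $a_\ell\otimes b_\ell\otimes c_\ell$ is an atom, $\|T^\star\|_{\calK_6}\le \sum_\ell|\sigma_\ell|=r^*$ is immediate from the definition.

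Where your sketch is genuinely thin is the mechanism behind the Rademacher bound, and ``Matrix Bernstein on the pseudo-moment matrix'' is not the right tool. The paper's actual move is a pseudo-expectation Cauchy--Schwarz (a ``resolution'' step) that squares the cubic form $\langle Z,Y^{(1)}\otimes Y^{(2)}\otimes Y^{(3)}\rangle$ into a degree-$4$ quadratic form, and then reduces everything to $\|B\|$ for the specific $n_2n_3\times n_2n_3$ matrix $B_{(j,k'),(j',k)}=\sum_i Z_{i,j,k}Z_{i,j',k'}$; the indexing $(j,k')$ vs.\ $(j',k)$ (pairing indices from \emph{different} clauses) is what makes the spectral bound hold at all. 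The entries of $B$ are quadratic in the signs and hence dependent, so Matrix Bernstein does not apply directly; instead one does a combinatorial trace-moment count on $\E\Tr((BB^T)^\ell)$ to get $\|B\|=\widetilde{O}(m/(n_1^{1/2}\min(n_2,n_3)))$, which after plugging into the Cauchy--Schwarz step yields $R^m(\|\cdot\|_{\calK_6})=\widetilde{O}(\sqrt{n_1^{1/2}(n_2+n_3)/m})$. Apart from this mechanism, your outline matches the paper.
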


Since the error bound above is quite involved, let us dissect the terms in it. In fact, having an additive $\delta$ in the error bound is unavoidable. We have not assumed anything about $\Delta$ in (\ref{eq:tensor}) except a bound on the average and maximum magnitude of its entries. If $\Delta$ were a random tensor whose entries are $+\delta$ and $-\delta$ then no matter how many entries of $T$ we observe, we cannot hope to obtain error less than $\delta$ on the unobserved entries\footnote{The factor of $2$ is not important, and comes from needing a bound on the empirical error of how well the low rank part of $T$ itself agrees with our observations so far. We could replace it with any other constant factor that is larger than $1$.}. The crucial point is that the remaining term in the error bound becomes $o(1)$ when $m = \widetilde{\Omega}((r^*)^2 n^{3/2})$ which for polylogarithmic $r^*$ improves over the n\"aive algorithm for tensor completion by a {\em polynomial} factor in terms of the number of observations. Moreover our algorithm works without any constraints that factors $a_\ell, b_\ell$ and $c_\ell$ be orthogonal or even have low inner-product. 

In non-degenerate cases we can even remove another factor of $r^*$ from the number of observations we need. Suppose that $T$ is a tensor as in (\ref{eq:tensor}), but let $\sigma_\ell$ be Gaussian random variables with mean zero and variance one. The factors $a_\ell, b_\ell$ and $c_\ell$ are still fixed, but because of the randomness in the coefficients $\sigma_\ell$, the entries of $T$ are now random variables. 

\begin{corollary} \label{corr:inf2}
Suppose we are given $m$ observations whose locations are chosen uniformly at random (and without replacement) from a tensor $T$ of the form (\ref{eq:tensor}), where each coefficient $\sigma_\ell$ is a Gaussian random variable with mean zero and variance one, and each of the factors $a_\ell, b_\ell$ and $c_\ell$ are $C$-incoherent. 

Further, suppose that for a $1-o(1)$ fraction of the entries of $T$, we have $\operatorname{var}(T_{i,j,k}) \geq r/\operatorname{polylog}(n) = V$ and that $\Delta$ is a tensor where each entry is a Gaussian with mean zero and variance $o(V)$. Then there is a polynomial time algorithm that outputs a hypothesis $X$ that satisfies
 \[
 X_{i,j,k} = \Big (1\pm o(1) \Big )T_{i,j,k}
 \]
for a $1 - o(1)$ fraction of the entries. The algorithm succeeds with probability at least $1 - o(1)$ over the randomness of the locations of the observations, and the realizations of the random variables $\sigma_\ell$ and the entries of $\Delta$. Moreover the algorithm uses $m = C^6 n^{3/2} r \operatorname{polylog}(n)$ observations. 
\end{corollary}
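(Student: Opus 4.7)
The strategy is to apply Theorem~\ref{thm:predict} with $\epsilon = 1/\poly(n)$ under the assumption that all randomness (observations, the $\sigma_\ell$, and $\Delta$) is typical, and then to convert its average $\ell_1$ error bound into a pointwise multiplicative guarantee on most entries using Gaussian anti-concentration of $T_{i,j,k}$ together with Markov's inequality.

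I first verify the hypotheses of Theorem~\ref{thm:predict}. Because the $\sigma_\ell \sim N(0,1)$ are independent, standard Gaussian concentration gives $r^* = \sum_\ell |\sigma_\ell| = O(r)$ with probability $1-o(1)$. Writing $\tau$ for the noise standard deviation (so $\tau^2 = o(V) = o(r/\polylog n)$), a concentration argument for a sum of $n_1 n_2 n_3$ half-Gaussians gives $\delta = \Theta(\tau) = o(\sqrt{V})$ with high probability, while a union bound gives $\max_{i,j,k}|\Delta_{i,j,k}| = O(\tau\sqrt{\log n})$. For $m$ as claimed and $\epsilon = 1/\poly(n)$, the quantity $\sqrt{m/\log(2/\epsilon)}\,\delta$ easily exceeds $\tau\sqrt{\log n}$, so the maximum-noise hypothesis of Theorem~\ref{thm:predict} holds. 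Plugging $r^* = O(r)$ and $m = C^6 n^{3/2} r\,\polylog(n)$ into the theorem now gives
\[
\mbox{err}(X) \;\le\; 4 C^3 \cdot O(r) \sqrt{\frac{n^{3/2}\log^4 n}{C^6 n^{3/2} r\,\polylog(n)}} \;+\; 2\delta \;=\; o(\sqrt{V}),
\]
provided the $\polylog(n)$ factor hidden in $m$ is chosen to dominate both the $\log^4 n$ above and the $\polylog n$ inside $V$. So the average $\ell_1$ error per entry is asymptotically smaller than the typical magnitude $\sqrt{V}$ of an entry of $T$.

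The remaining, and I expect main, step is upgrading this average additive bound into a $(1\pm o(1))$ multiplicative bound on a $1-o(1)$ fraction of entries. Let $\gamma = \mbox{err}(X)/\sqrt{V} = o(1)$ and choose thresholds $\alpha = \gamma^{1/2}$ and $\beta = \gamma^{1/3}$; both are $o(1)$ with $\alpha/\beta = \gamma^{1/6} \to 0$. Markov applied to $\sum_{i,j,k}|X_{i,j,k}-T_{i,j,k}|$ shows that a $1-\gamma/\alpha = 1-o(1)$ fraction of entries satisfy $|X_{i,j,k}-T_{i,j,k}| \le \alpha\sqrt{V}$. On the other hand, the corollary's hypothesis supplies a $1-o(1)$ fraction of entries with $\operatorname{var}(T_{i,j,k}) \ge V$, and for each such entry $T_{i,j,k}$ is a centered Gaussian of standard deviation at least $\sqrt{V}$, so Gaussian anti-concentration gives $\Pr[|T_{i,j,k}|<\beta\sqrt{V}]=O(\beta)$. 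Taking expectations over the $\sigma_\ell$'s and then applying Markov to the count of ``small-$|T|$'' indices, with probability $1-o(1)$ a further $1-O(\sqrt{\beta}) = 1-o(1)$ fraction of entries satisfy $|T_{i,j,k}| \ge \beta\sqrt{V}$. Intersecting the two good sets yields a $1-o(1)$ fraction of entries on which $|X_{i,j,k}-T_{i,j,k}|/|T_{i,j,k}| \le \alpha/\beta = o(1)$, which is exactly the advertised multiplicative bound.
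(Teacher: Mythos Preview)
Your proposal is correct and follows essentially the same route as the paper: apply Theorem~\ref{thm:predict} to get an average $\ell_1$ error that is $o(\sqrt{V})$, use anti-concentration to show most entries of $T$ have magnitude $\gtrsim \sqrt{V}$, and intersect with a Markov bound on the additive error. You are in fact more careful than the paper in verifying the hypotheses of Theorem~\ref{thm:predict} (bounding $r^*$, $\delta$, and $\max|\Delta_{i,j,k}|$), and your use of ordinary Gaussian anti-concentration is simpler than the paper's appeal to Carbery--Wright, since $T_{i,j,k}$ is a \emph{linear} combination of the Gaussians $\sigma_\ell$ and $\Delta_{i,j,k}$ rather than a degree-three polynomial.
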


\noindent In the setting above, it is enough that the coefficients $\sigma_\ell$ are random and that the non-zero entries in the factors are spread out to ensure that the typical entry in $T$ has variance about $r$. Consequently, the typical entry in $T$ is about $\sqrt{r}$. This fact combined with the error bounds in Theorem~\ref{thm:predict} immediately yield the above corollary . Remarkably, the guarantee is interesting even when $r = n^{3/2 - \epsilon}$ (the so-called overcomplete case). In this setting, if we observe a subpolynomial fraction of the entries of $T$ we are able to recover almost all of the remaining entries almost entirely, even though there are no known algorithms for decomposing an overcomplete, third-order tensor even if we are given {\em all} of its entries, at least without imposing much stronger conditions that the factors be nearly orthogonal \cite{GM}. 

We believe that this work is a natural first step in designing practically efficient algorithms for tensor completion. Our algorithms manage to leverage the structure across the slices through the tensor, instead of treating each slice as an independent matrix completion problem. Now that we know this is {\em possible}, a natural follow-up question is to get more efficient algorithms. Our algorithms are based on the sixth level of the sum-of-squares hierarchy and run in polynomial time, but are quite far from being practically efficient as stated. Recent work of Hopkins et al. \cite{HSSS} shows how to speed up sum-of-squares and obtain {\em nearly linear time} algorithms for a number of problems where the only previously known algorithms ran in a prohibitively large degree polynomial running time. Another approach would be to obtain similar guarantees for alternating minimization. Currently, the only known approaches \cite{JO} require that the factors are orthonormal and only work in the undercomplete case. Finally, it would be interesting to get algorithms that recover a low rank tensor exactly when there is no noise.

\subsection{Our approach}

All of our algorithms are based on solving the following optimization problem:
\begin{equation}\label{eq:conv}
 \qquad \min \|X\|_\calK \mbox{ s.t. } \exists X \mbox{ with }  \frac{1}{m} \sum_{(i,j,k) \in \Omega} | X_{i,j,k} - T_{i,j,k}| \leq 2 \delta
 \end{equation}
and outputting the minimizer $X$, where $\|\cdot\|_\calK$ is some norm that can be computed in polynomial time. It will be clear from the way we define the norm that the low rank part of $T$ will itself be a good candidate solution. But this is not necessarily the solution that the convex program finds. How do we know that whatever it finds not only has low entry-wise error on the observed entries of $T$, but also on the unobserved entries too?

 This is a well-studied topic in statistical learning theory, and as is standard we can use the notion of Rademacher complexity as a tool to bound the error. The Rademacher complexity is a property of the norm we choose, and our main innovation is to use the sum-of-squares hierarchy to suggest a suitable norm. Our results are based on establishing a connection between noisy tensor completion and refuting random constraint satisfaction problems. Moreover, our analysis follows by embedding algorithms for refutation within the sum-of-squares hierarchy as a method to bound the Rademacher complexity. 

A natural question to ask is: Are there other norms that have even better Rademacher complexity than the ones we use here, and that are still computable in polynomial time? 
It turns out that {\em any} such norm would immediately lead to much better algorithms for refuting random constraint satisfaction problems than we currently know. We have not yet introduced Rademacher complexity yet, so we state our lower bounds informally:

\begin{theorem} [informal]
For any $\epsilon > 0$, if there is a polynomial time algorithm that achieves error
 $$\mbox{err}(X) \leq r^* \sqrt{\frac{n^{3/2-\epsilon}}{ m } }$$ through the framework of Rademacher complexity then there is an efficient algorithm for refuting a random $3$-SAT formula on $n$ variables with $m = n^{3/2 - \epsilon}$ clauses. Moreover the natural sum-of-squares relaxation requires at least $n^{2\epsilon}$-levels in order to achieve the above error (again through the framework of Rademacher complexity).
\end{theorem}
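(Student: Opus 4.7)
The plan is to reduce refutation of random 3-XOR (which implies 3-SAT refutation by standard reductions) to bounding the Rademacher complexity of the norm ball associated with $\|\cdot\|_\calK$. Given a random 3-XOR instance with $m$ equations $\sigma_i = x_{j_1^i}\cdot x_{j_2^i}\cdot x_{j_3^i}$, where $\sigma_i\in\{\pm 1\}$ and $(j_1^i,j_2^i,j_3^i)$ is drawn uniformly at random, I form a tensor completion instance on an $n\times n\times n$ tensor by setting $\Omega = \{(j_1^i,j_2^i,j_3^i)\}_{i=1}^m$ and $T^{\mathrm{obs}}_{j_1^i,j_2^i,j_3^i} = \sigma_i$. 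A satisfying assignment $x\in\{\pm 1\}^n$ yields the rank-one, zero-empirical-error completion $x^{\otimes 3}$ (with $r^* = O(1)$ in the theorem's normalization), whereas for a genuinely random instance no $\pm 1$-valued rank-one tensor attains empirical error below $1/2 - o(1)$ with high probability; hence any algorithm that discriminates these two cases refutes.

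Any algorithm of the form (\ref{eq:conv}) whose analysis proceeds via Rademacher complexity must control the generalization error of its minimizer over $\{X:\|X\|_\calK\le r^*\}$ by
\[
\calR_m \;=\; \E_\sigma \sup_{\|X\|_\calK \le r^*} \frac{1}{m}\sum_{i=1}^m \sigma_i X_{\omega_i},
\]
which the hypothesis of the theorem forces to be at most $r^*\sqrt{n^{3/2-\epsilon}/m}$. For the relaxation to even succeed on the planted satisfiable case, a constant rescaling of this ball must contain every $C$-incoherent rank-one tensor $x^{\otimes 3}$ with $x\in\{\pm 1\}^n$. Restricting the supremum defining $\calR_m$ to this subfamily gives the polynomial-time certificate
\[
\max_{x\in\{\pm 1\}^n}\frac{1}{m}\sum_{i=1}^m \sigma_i\, x_{j_1^i}x_{j_2^i}x_{j_3^i} \;\le\; O\!\left(r^*\sqrt{n^{3/2-\epsilon}/m}\right).
\]
At $m = n^{3/2-\epsilon}$ the right-hand side is $O(r^*)$, strictly below the satisfiable value $1$ for a small enough constant $r^*$; since $\|\cdot\|_\calK$ is polynomial-time computable, so is the certificate, which is the promised efficient refutation.

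For the sum-of-squares half of the theorem, I take $\|\cdot\|_\calK$ to be the norm defined by the level-$d$ SoS relaxation of the tensor nuclear norm (the same norm sequence analysed in the main algorithmic result, with a general $d$ in place of $6$). Its Rademacher complexity on a random observation set coincides, up to the $1/m$ normalization, with the degree-$d$ SoS pseudo-expectation value of the associated random 3-XOR polynomial on the hypercube $\{\pm 1\}^n$. Known SoS lower bounds for random 3-XOR refutation (Grigoriev; Schoenebeck; Kothari--Mori--O'Donnell--Witmer) show that this pseudo-expectation value remains $\Omega(1)$ unless $m \ge \widetilde\Omega(n^{3/2}/d^{1/2})$; inverting, achieving the target error at $m = n^{3/2-\epsilon}$ forces $d \ge n^{2\epsilon}$, which is exactly the claim.

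The main obstacle I foresee is the first step: rigorously formalizing what ``through the framework of Rademacher complexity'' means, so that any such algorithm's norm must both (a) contain (after a constant rescaling) the rank-one $\pm 1$ tensors produced by the reduction and (b) have Rademacher complexity controlled by the stated generalization bound. Once these two features are abstracted from the generic Rademacher template, both halves of the theorem follow by invoking the known algorithmic and SoS lower bounds for refuting random 3-XOR.
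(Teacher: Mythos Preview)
Your proposal is correct and follows essentially the same route as the paper: the first half is the content of Theorem~\ref{thm:radtoref} together with Corollary~\ref{cor:3sat} (a polynomial-time computable norm whose unit ball contains the rank-one $\pm 1$ tensors and whose Rademacher complexity obeys the stated bound yields a strong-refutation certificate for random $3$-XOR, and hence for random $3$-SAT via Feige's XOR trick), and the second half is Corollary~\ref{corr:3lb}, obtained by translating the Grigoriev/Schoenebeck Lasserre lower bounds for random $3$-XOR into a pseudo-expectation that witnesses $R^m(\|\cdot\|_{\calK_k}) = 1 - o(1)$ at $m = n^{3/2-\epsilon}$ for any $k = O(n^{c\epsilon})$, $c<2$. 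One minor wording slip: the polynomial-time certificate is the supremum of $\tfrac{1}{m}\langle Z,X\rangle$ over the norm ball (which upper bounds the NP-hard $\max_{x\in\{\pm 1\}^n}$), not the max itself; and to turn the Rademacher bound (an expectation) into a high-probability certificate the paper also needs the bounded-entry assumption and McDiarmid, which you should make explicit.
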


\noindent These results follow directly from the works of Grigoriev \cite{G}, Schoenebeck \cite{Sch} and Feige \cite{Fe}. There are similar connections between our upper bounds and the work of Coja-Oghlan, Goerdt and Lanka \cite{COGL} who give an algorithm for strongly refuting random $3$-SAT. In Section~\ref{sec:connect} we explain some preliminary connections between these fields, at which point we will be in a better position to explain how we can borrow tools from one area to address open questions in another. We state this theorem more precisely in Corollary~\ref{cor:3sat} and Corollary~\ref{corr:3lb}, which provide both conditional and unconditional lower bounds that match our upper bounds.

\subsection{Computational vs. Sample Complexity Tradeoffs}

It is interesting to compare the story of matrix completion and tensor completion. In matrix completion, we have the best of both worlds: There are efficient algorithms which work when the number of observations is close to the information theoretic minimum. In tensor completion, we gave algorithms that improve upon the number of observations needed by a polynomial factor but still require a polynomial factor more observations than can be achieved if we ignore computational considerations. 
We believe that for many other linear inverse problems (e.g. sparse phase retrieval), there may well be gaps between what can be achieved information theoretically and what can be achieved with computationally efficient estimators. Moreover, proving lower bounds against the sum-of-squares hierarchy offers a new type of evidence that problems are hard, that does not rely on reductions from other average-case hard problems which seem (in general) to be brittle and difficult to execute while preserving the {\em naturalness} of the input distribution. In fact, even when there are such reductions \cite{BR}, the sum-of-squares hierarchy offers a methodology to make sharper predictions for questions like: Is there a quasi-polynomial time algorithm for sparse PCA, or does it require exponential time? 


\subsection*{Organization}

In Section~\ref{sec:connect} we introduce Rademacher complexity, the tensor nuclear norm and strong refutation. We connect these concepts by showing that any norm that can be computed in polynomial time and has good Rademacher complexity yields an algorithm for strongly refuting random $3$-SAT. In Section~\ref{sec:using} we show how a particular algorithm for strong refutation can be embedded into the sum-of-squares hierarchy and directly leads to a norm that can be computed in polynomial time and has good Rademacher complexity. In Section~\ref{sec:spectral} we establish certain spectral bounds that we need, and prove our main upper bounds. In Section~\ref{sec:lb} we prove lower bounds on the Rademacher complexity of the sequence of norms arising from the sum-of-squares hierarchy by a direct reduction to lower bounds for refuting random $3$-XOR. In Appendix~\ref{app:extensions} we give a reduction from noisy tensor completion on asymmetric tensors to symmetric tensors. This is what allows us to extend our analysis to arbitrary order $d$ tensors, but the proofs are essentially identical to those in the $d = 3$ case but more notationally involved so we omit them. 

\section{Noisy Tensor Completion and Refutation}\label{sec:connect}

Here we make the connection between noisy tensor completion and strong refutation explicit. Our first step is to formulate a problem that is a special case of both, and studying it will help us clarify how notions from one problem translate to the other.

\subsection{The Distinguishing Problem}\label{sec:dist}

Here we introduce a problem that we call the {\em distinguishing problem}. We are given random observations from a tensor and promised that the underlying tensor fits into one of the two following categories. We want an algorithm that can tell which case the samples came from, and succeeds using as few observations as possible. The two cases are:

\begin{enumerate}

\item Each observation is chosen uniformly at random (and without replacement) from a tensor $T$ where independently for each entry we set
\begin{equation*}
T_{i,j,k}=
\begin{cases}
a_i a_j a_k &\mbox{ with probability } 7/8\\
\mbox{ } \mbox{ } 1 &\mbox { with probability } 1/16\\
-1 &\mbox{ else}
\end{cases}
\end{equation*}
where $a$ is a vector whose entries are $\pm 1$.

\item Alternatively, each observation is chosen uniformly at random (and without replacement) from a tensor $T$ each of whose entries is independently set to either $+1$ or $-1$ and with equal probability.

\end{enumerate}

\noindent In the first case, the entries of the underlying tensor $T$ are {\em predictable}. It is possible to guess a $15/16$ fraction of them correctly, once we have observed enough of its entries to be able to deduce $a$. And in the second case, the entries of $T$ are completely unpredictable because no matter how many entries we have observed, the remaining entries are still random. Thus we cannot predict any of the unobserved entries better than random guessing. 

Now we will explain how the distinguishing problem can be equivalently reformulated in the language of refutation. We give a formal definition for strong refutation later (Definition~\ref{def:strongref}), but for the time being we can think of it as the task of (given an instance of a constraint satisfaction problem) certifying that there is no assignment that satisfies many of the clauses. We will be interested in $3$-XOR formulas, where there are $n$ variables $v_1, v_2, ..., v_n$ that are constrained to take on values $+1$ or $-1$. Each clause takes the form
$$v_i \cdot v_j \cdot v_k = T_{i,j,k}$$
where the right hand side is either $+1$ or $-1$. The clause represents a parity constraint but over the domain $\{+1, -1\}$ instead of over the usual domain $\mathbb{F}_2$. We have chosen the notation suggestively so that it hints at the mapping between the two views of the problem. Each observation $T_{i,j,k}$ maps to a clause $v_i \cdot v_j \cdot v_k = T_{i,j,k}$ and vice-versa. Thus an equivalent way to formulate the distinguishing problem is that we are given a $3$-XOR formula which was generated in one of the following two ways:

\begin{enumerate}

\item Each clause in the formula is generated by choosing an ordered triple of variables $(v_i, v_j, v_k)$ uniformly at random (and without replacement) and we set
\begin{equation*}
v_i \cdot v_j \cdot v_k=
\begin{cases}
a_i a_j a_k &\mbox{ with probability } 7/8\\
\mbox{ } \mbox{ } 1 &\mbox { with probability } 1/16\\
-1 &\mbox{ else}
\end{cases}
\end{equation*}
where $a$ is a vector whose entries are $\pm 1$. Now $a$ represents a planted solution and by design our sampling procedure guarantees that many of the clauses that are generated are consistent with it. 

\item Alternatively, each clause in the formula is generated by choosing an ordered triple of variables $(v_i, v_j, v_k)$ uniformly at random (and without replacement) and we set
$v_i \cdot v_j \cdot v_k = z_{i,j,k}$
where $z_{i,j,k}$ is a random variable that takes on values $+1$ and $-1$.

\end{enumerate}

\noindent In the first case, the $3$-XOR formula has an assignment that satisfies a $15/16$ fraction of the clauses in expectation by setting $v_i = a_i$. In the second case, any fixed assignment satisfies at most half of the clauses in expectation. Moreover if we are given $\Omega(n \log n)$ clauses, it is easy to see by applying the Chernoff bound and taking a union bound over all possible assignments that with high probability there is no assignment that satisfies more than a $1/2 + o(1)$ fraction of the clauses.

This will be the starting point for the connections we establish between noisy tensor completion and refutation. 
Even in the matrix case these connections seem to have gone unnoticed, and the same spectral bounds that are used to analyze the Rademacher complexity of the nuclear norm \cite{SS} are also used to refute random $2$-SAT formulas \cite{GK}, but this is no accident. 

\subsection{Rademacher Complexity}\label{sec:rad}

Ultimately our goal is to show that the hypothesis $X$ that our convex program finds is entry-wise close to the unknown tensor $T$. By virtue of the fact that $X$ is a feasible solution to (\ref{eq:conv}) we know that it is entry-wise close to $T$ on the observed entries. This is often called the empirical error:

\begin{definition}
For a hypothesis $X$, the empirical error is
$$\mbox{emp-err}(X) = \frac{1}{m} \sum_{(i,j,k) \in \Omega} | X_{i,j,k} - T_{i,j,k}|$$
\end{definition}

Recall that $\mbox{err}(X)$ is the average entry-wise error between $X$ and $T$, over all (observed and unobserved) entries. Also recall that among the candidate $X$'s that have low empirical error, the convex program finds the one that minimizes $\|X\|_\calK$ for some polynomial time computable norm. The way we will choose the norm $\|\cdot\|_\calK$ and our bound on the maximum magnitude of an entry of $\Delta$ will guarantee that the low rank part of $T$ will with high probability be a feasible solution. This ensures that $\|X\|_\calK$ for the $X$ we find is not too large either. One way to bound $\mbox{err}(X)$ is to show that no hypothesis in the unit norm ball can have too large a gap between its error and its empirical error (and then dilate the unit norm ball so that it contains $X$). With this in mind, we define:

\begin{definition}
For a norm $\|\cdot\|_{\calK}$ and a set $\Omega$ of observations, the generalization error is
$$\sup_{\|X\|_\calK \leq 1} \Big | \mbox{err}(X) - \mbox{emp-err}(X) \Big |  $$
\end{definition}

\noindent It turns out that one can bound the generalization error via the Rademacher complexity. 
\begin{definition}\label{def:rad}
Let $\Omega = \{(i_1, j_1, k_1), (i_2, j_2, k_2), ..., (i_m, j_m, k_m)\}$ be a set of $m$ locations chosen uniformly at random (and without replacement) from $[n_1] \times [n_2] \times [n_3]$. And let $\sigma_1, \sigma_2, ..., \sigma_\ell$ be random $\pm 1$ variables.
The Rademacher complexity of (the unit ball of) the norm $\|\cdot\|_\calK$ is defined as
$$R^m(\|\cdot\|_\calK) = \E_{\Omega, \sigma}  \Big [ \sup_{\|X\|_\calK \leq 1} \Big |  \sum_{\ell =1}^m   \sigma_\ell X_{i_\ell, j_\ell, k_\ell}  \Big | \Big ] $$
\end{definition}

It follows from a standard symmetrization argument from empirical process theory \cite{KP, BM} that the Rademacher complexity does indeed bound the generalization error.

\begin{theorem}\label{thm:generalize}
Let $\epsilon\in (0,1)$ and suppose each $X$ with $\|X\|_{\calK} \leq 1$ has bounded loss  \---- i.e.  $|X_{i,j,k} - T_{i,j,k}| \leq a$ and that locations $(i, j, k)$ are chosen uniformly at random and without replacement. Then with probability at least $1- \epsilon$, for every $X$ with $\|X\|_{\calK} \leq 1$, we have
$$\mbox{err}(X) \leq \mbox{emp-err}(X) + 2 R^m(\|\cdot\|_\calK) + 2 a \sqrt{\frac{\ln(1/\epsilon)}{m}}$$
\end{theorem}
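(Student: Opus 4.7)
The plan is to follow the classical three-step recipe from empirical process theory---concentration, symmetrization, contraction---adapted to our without-replacement sampling model. Write $\ell(X,\omega) = |X_\omega - T_\omega|$ for $\omega = (i,j,k)$, so that $\mathrm{emp\text{-}err}(X) = \frac{1}{m}\sum_{\ell=1}^m \ell(X,\omega_\ell)$ and (up to the negligible difference between sampling with and without replacement) $\mathrm{err}(X) = \E_\omega \ell(X,\omega)$.

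First, I would apply a bounded-differences argument to the functional $F(\omega_1,\dots,\omega_m) = \sup_{\|X\|_\calK \le 1}|\mathrm{err}(X) - \mathrm{emp\text{-}err}(X)|$. Since the loss $\ell(X,\omega)$ is bounded by $a$, replacing any single observation changes $F$ by at most $2a/m$. McDiarmid's inequality---in a variant that handles sampling without replacement, obtained e.g.\ by Hoeffding's reduction to the i.i.d.\ case or by the bounded-differences inequality of Bardenet--Maillard---then gives $F \le \E_\Omega F(\Omega) + 2a\sqrt{\ln(1/\epsilon)/m}$ with probability at least $1-\epsilon$.

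Next, I would bound $\E_\Omega F(\Omega)$ by the Rademacher complexity via the standard symmetrization trick. Introduce an independent ghost sample $\Omega' = \{\omega_1',\dots,\omega_m'\}$, write $\mathrm{err}(X) = \E_{\Omega'}\frac{1}{m}\sum_\ell \ell(X,\omega_\ell')$, and apply Jensen's inequality to pull the supremum outside. Exchangeability of $(\omega_\ell,\omega_\ell')$ allows inserting i.i.d.\ Rademacher signs $\sigma_\ell$, and the triangle inequality yields
$$\E_\Omega F(\Omega) \;\le\; \frac{2}{m}\,\E_{\Omega,\sigma}\sup_{\|X\|_\calK \le 1}\Bigl|\sum_{\ell=1}^m \sigma_\ell\,\ell(X,\omega_\ell)\Bigr|.$$
Because $u \mapsto |u - T_{\omega_\ell}|$ is $1$-Lipschitz, Talagrand's contraction principle (applied in its symmetric/two-sided form so that the missing $\phi(0)=0$ and the constant $T$-shifts cause no trouble) then lets me replace $\ell(X,\omega_\ell)$ by $X_{\omega_\ell}$, matching the paper's definition of $R^m(\|\cdot\|_\calK)$. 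Combined with the concentration step this yields the theorem.

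The only step requiring real care is the contraction: the cleanest form of Talagrand's inequality is for sums without an outer absolute value and for contractions with $\phi(0)=0$, neither of which applies to $\ell$ as written. One handles this either by invoking the symmetric version of the contraction principle, or by peeling off the mean-zero term $\frac{1}{m}\sum_\ell \sigma_\ell T_{\omega_\ell}$ separately and bounding the sup of its absolute value by a standard Khintchine-type calculation, both of which are routine. The without-replacement sampling is a minor technical wrinkle, absorbed by any of the standard comparison results showing that concentration is no worse than in the i.i.d.\ case.
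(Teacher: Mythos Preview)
Your proposal is correct and follows the same overall architecture as the paper---McDiarmid for the high-probability step, then ghost-sample symmetrization to pass to the Rademacher complexity. The only difference is in the contraction step: whereas you reach for Talagrand's contraction principle, the paper instead peels off the $T$-dependent part directly. Concretely, after arriving at $\E_{\Omega,\sigma}\bigl[\sup_X|\frac{1}{m}\sum_\ell \sigma_\ell\,|X_{\omega_\ell}-T_{\omega_\ell}|\,|\bigr]$, the paper uses the triangle inequality to replace $|X_{\omega_\ell}-T_{\omega_\ell}|$ by $|X_{\omega_\ell}|+|T_{\omega_\ell}|$, splits the resulting sum, and then uses that $\sigma_\ell|a|\stackrel{d}{=}\sigma_\ell a$ for Rademacher $\sigma_\ell$ to drop the inner absolute values. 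This is precisely the second option you describe (``peeling off the mean-zero term $\frac{1}{m}\sum_\ell \sigma_\ell T_{\omega_\ell}$ separately''), so your alternative route is in fact the paper's route. Your primary suggestion of invoking the symmetric contraction lemma is a cleaner and more standard way to reach the same bound without the extra $T$-term; either is fine.
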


We repeat the proof here following \cite{BM} for the sake of completeness but readers familiar with Rademacher complexity can feel free to skip ahead to Definition~\ref{def:Z}. The main idea is to let $\Omega'$ be an independent set of $m$ samples from the same distribution, again without replacement. The expected generalization error is:

\begin{equation}\label{eqn:generalization}
\E_{\Omega} \Big [ \sup_{\|X\|_\calK \leq 1} \Big | \frac{1}{m} \sum_{\ell =1}^m  | X_{i_\ell, j_\ell, k_\ell} - T_{i_\ell, j_\ell, k_\ell}  | - \E_{i,j,k} [ | X_{i,j,k} - T_{i,j,k} | ] \Big | \Big ]\tag{$\ast$}
\end{equation}

Then we can write
\begin{eqnarray*}
(\mbox{\ref{eqn:generalization}}) &=& \E_{\Omega} \Big [ \sup_{\|X\|_\calK \leq 1} \Big | \frac{1}{m} \sum_{\ell =1}^m  | X_{i_\ell, j_\ell, k_\ell} - T_{i_\ell, j_\ell, k_\ell}  | - \frac{1}{m} \E_{\Omega'} [\sum_{\ell =1}^m | X_{i'_\ell, j'_\ell, k'_\ell} - T_{i'_\ell, j'_\ell, k'_\ell}  | ] \Big | \Big ] \\
&\leq& \E_{\Omega, \Omega'} \Big [ \sup_{\|X\|_\calK \leq 1} \Big | \frac{1}{m} \Big ( \sum_{\ell =1}^m  | X_{i_\ell, j_\ell, k_\ell} - T_{i_\ell, j_\ell, k_\ell}  | - | X_{i'_\ell, j'_\ell, k'_\ell} - T_{i'_\ell, j'_\ell, k'_\ell} |  \Big ) \Big | \Big ]
\end{eqnarray*}
where the last line follows by the concavity of $\sup(\cdot)$. Now we can use the Rademacher (random $\pm 1$) variables $\{\sigma_\ell\}_\ell$ and rewrite the right hand side of the above expression as follows:
\begin{eqnarray*}
(\mbox{\ref{eqn:generalization}}) &\leq& \E_{\Omega, \Omega', \sigma} \Big [ \sup_{\|X\|_\calK \leq 1} \Big | \frac{1}{m} \sum_{\ell =1}^m  \sigma_\ell \Big (  | X_{i_\ell, j_\ell, k_\ell} - T_{i_\ell, j_\ell, k_\ell}  | - | X_{i'_\ell, j'_\ell, k'_\ell} - T_{i'_\ell, j'_\ell, k'_\ell} |  \Big ) \Big | \Big ] \\
&\leq&  \E_{\Omega, \Omega', \sigma} \Big [ \sup_{\|X\|_\calK \leq 1} \Big | \frac{1}{m} \sum_{\ell =1}^m  \sigma_\ell | X_{i_\ell, j_\ell, k_\ell} - T_{i_\ell, j_\ell, k_\ell}  |  \Big | + \Big | \frac{1}{m} \sum_{\ell =1}^m  \sigma_\ell | X_{i'_\ell, j'_\ell, k'_\ell} - T_{i'_\ell, j'_\ell, k'_\ell}  |  \Big | \Big ] \\
&\leq& 2 \E_{\Omega, \sigma} \Big [ \sup_{\|X\|_\calK \leq 1} \Big | \frac{1}{m} \Big ( \sum_{\ell =1}^m  \sigma_\ell | X_{i_\ell, j_\ell, k_\ell} - T_{i_\ell, j_\ell, k_\ell}  |  \Big ) \Big | \Big ] \\
&\leq& 2 \E_{\Omega, \sigma} \Big [ \sup_{\|X\|_\calK \leq 1} \Big | \frac{1}{m} \Big ( \sum_{\ell =1}^m  \sigma_\ell \Big ( | X_{i_\ell, j_\ell, k_\ell}| + |T_{i_\ell, j_\ell, k_\ell}  | \Big )  \Big ) \Big | \Big ] \\
&\leq& 2 \E_{\Omega, \sigma} \Big [  \Big | \frac{1}{m} \sum_{\ell =1}^m \sigma_\ell |T_{i_\ell, j_\ell, k_\ell}|     \Big | \Big ] + 2  \E_{\Omega, \sigma} \Big [ \sup_{\|X\|_\calK \leq 1} \Big | \frac{1}{m}  \sum_{\ell =1}^m   \sigma_\ell |X_{i_\ell, j_\ell, k_\ell}|  \Big | \Big ]\\
&=& 2 \E_{\Omega, \sigma} \Big [  \Big | \frac{1}{m} \sum_{\ell =1}^m \sigma_\ell T_{i_\ell, j_\ell, k_\ell}     \Big | \Big ] + 2  \E_{\Omega, \sigma} \Big [ \sup_{\|X\|_\calK \leq 1} \Big | \frac{1}{m}  \sum_{\ell =1}^m   \sigma_\ell X_{i_\ell, j_\ell, k_\ell}  \Big | \Big ]
\end{eqnarray*}
where the second, fourth and fifth inequalities use the triangle inequality. The equality uses the fact that the $\sigma_\ell$'s are random signs and hence can absorb the absolute value around the terms that they multiply. The second term above in the last expression is exactly the Rademacher complexity that we defined earlier. This argument only shows that the Rademacher complexity bounds the expected generalization error. However it turns out that we can also use the Rademacher complexity to bound the generalization error with high probability by applying McDiarmid's inequality. See for example \cite{Bal}. We also remark that generalization bounds are often stated in the setting where samples are drawn i.i.d., but here the locations of our observations are sampled without replacement. Nevertheless for the settings of $m$ we are interested in, the fraction of our observations that are repeats is $o(1)$ \---- in fact it is subpolynomial \---- and we can move back and forth between both sampling models at negligible loss in our bounds.

In much of what follows it will be convenient to think of $\Omega = \{(i_1, j_1, k_1), (i_2, j_2, k_2), ..., (i_m, j_m, k_m)\}$ and $\{\sigma_\ell\}_\ell$ as being represented by a sparse tensor $Z$, defined below.

\begin{definition}\label{def:Z}
Let $Z$ be an $n_1 \times n_2 \times n_3$ tensor such that
$$Z_{i,j,k} = \begin{cases}
0,  \mbox{ if } (i,j,k) \notin \Omega \\
\sum_{\ell \mbox{ s.t. } (i, j, k) = (i_\ell, j_\ell, k_\ell)} \sigma_\ell 
\end{cases} $$
\end{definition}

This definition greatly simplifies our notation.
In particular we have
$$\sum_{\ell = 1}^m \sigma_\ell X_{i_\ell, j_\ell, k_\ell} =\sum_{i,j,k} Z_{i,j,k} X_{i,j,k} = \langle Z, X \rangle$$
where we have introduced the notation $\langle \mbox{ }\cdot\mbox{ },\mbox{ }\cdot\mbox{ } \rangle$ to denote the natural inner-product between tensors. Our main technical goal in this paper will be to analyze the Rademacher complexity of a sequence of successively tighter norms that we get from the sum-of-squares hierarchy, and to derive implications for noisy tensor completion and for refutation from these bounds.

\subsection{The Tensor Nuclear Norm}

Here we introduce the tensor nuclear norm and analyze its Rademacher complexity. Many works have suggested using it to solve tensor completion problems \cite{LMWY, SDS, YZ}. This suggestion is quite natural given that it is based on a similar guiding principle as that which led to $\ell_1$-minimization in compressed sensing and the nuclear norm in matrix completion \cite{Fa}. More generally, one can define the atomic norm for a wide range of linear inverse problems \cite{CRPW}, and the $\ell_1$-norm, the nuclear norm and the tensor nuclear norm are all special cases of this paradigm. Before we proceed, let us first formally define the notion of incoherence that we gave in the introduction.

\begin{definition}\label{def:balanced}
A length $n_i$ vector $a$ is \emph{$C$-incoherent} if $\|a\| = \sqrt{n_i}$ and $\|a\|_\infty \leq C$. 
\end{definition}

Recall that we chose to work with vectors whose typical entry is a constant so that the entries in $T$ do not become vanishingly small as the dimensions of the tensor increase. 
We can now define the tensor nuclear norm\footnote{The usual definition of the tensor nuclear norm has no constraints that the vectors $a$, $b$ and $c$ be $C$-incoherent. However, adding this additional requirement only serves to further restrict the unit norm ball, while ensuring that the low rank part of $T$ (when scaled down) is still in it, since the factors of $T$ are anyways assumed to be $C$-incoherent. This makes it easier to prove recovery guarantees because we do not need to worry about sparse vectors behaving very differently than incoherent ones, and since we are not going to compute this norm anyways this modification will make our analysis easier.}:

\begin{definition}[tensor nuclear norm]\label{def:anorm}
Let $\calA \subseteq \R^{ n_1 \times n_2 \times n_3 }$ be defined as
$$\calA = \Big \{ X \mbox{ s.t. } \exists \mbox{ distribution } \mu \mbox{ on triples of $C$-incoherent vectors with } X_{i,j,k} = \E_{(a,b,c) \leftarrow \mu}[a_i b_j c_k]\Big \}$$
The {\em tensor nuclear norm} of $X$ which is denoted by $\|X\|_{\calA}$  is the infimum over $\alpha$ such that $X/\alpha \in \cA$.
\end{definition}

\noindent In particular $\|T - \Delta \|_{\calA} \leq r^*$. Finally we give an elementary bound on the Rademacher complexity of the tensor nuclear norm. Recall that $n = \max(n_1, n_2, n_3)$. 

\begin{lemma}\label{radnuc}
$R^m(\| \cdot\|_\calA) = O(C^3\sqrt{\frac{n}{m}})$
\end{lemma}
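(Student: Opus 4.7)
The plan is to use the dual characterization of $\|\cdot\|_\calA$ and then bound the expected ``$C$-incoherent injective norm'' of the sparse random tensor $Z$ via an $\epsilon$-net.

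By Definition~\ref{def:anorm}, $\|X\|_\calA \leq 1$ iff $X$ is a convex combination of rank-one tensors $a\otimes b\otimes c$ whose factors are $C$-incoherent, so by linearity
\[
\sup_{\|X\|_\calA \leq 1} |\langle Z, X\rangle| \;=\; \sup_{a,b,c \text{ $C$-incoherent}} |\langle Z, a\otimes b\otimes c\rangle|,
\]
and hence $m\,R^m(\|\cdot\|_\calA) = \E_{\Omega,\sigma}\bigl[\sup_{a,b,c}|\langle Z, a\otimes b\otimes c\rangle|\bigr]$. It therefore suffices to show that this expected supremum is $O(C^3\sqrt{mn})$.

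For any fixed $C$-incoherent triple, $\langle Z, a\otimes b\otimes c\rangle = \sum_{\ell=1}^m \sigma_\ell a_{i_\ell} b_{j_\ell} c_{k_\ell}$ is a sum of $m$ independent mean-zero random variables each bounded in magnitude by $C^3$, so Hoeffding gives $\Pr[|\langle Z, a\otimes b\otimes c\rangle| > t] \leq 2\exp(-t^2/(2mC^6))$. To upgrade this to a uniform bound I would take a $(1/4)$-net (in $\ell_2$) of the $C$-incoherent rescaled unit vectors in each $\R^{n_i}$; since this set lies inside the unit sphere, the net has size at most $9^{n_i}$, and a union bound over the product of three such nets yields that with $t = O(C^3\sqrt{m(n_1+n_2+n_3)})$ the bad event has probability $\exp(-\Omega(n_1+n_2+n_3))$.

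The main obstacle is the discretization step. For a generic $(a,b,c)$ and its nearest net triple $(a_0,b_0,c_0)$, trilinearity gives
\[
f(a,b,c) - f(a_0,b_0,c_0) = f(a-a_0,b,c) + f(a_0,b-b_0,c) + f(a_0,b_0,c-c_0),
\]
but each remainder such as $a-a_0$ is \emph{not} itself $C$-incoherent --- its $\ell_\infty$ norm is only bounded by $2C$ while its $\ell_2$ norm is small --- so the sharp per-term Hoeffding bound does not directly reapply to it. I would resolve this either by (i) a Dudley-style chaining argument using Bernstein's inequality, whose variance proxy $\Var[\langle Z, u\otimes b\otimes c\rangle] \lesssim m\|u\|_2^2/n_1$ (for $C$-incoherent $b,c$) is in the natural $\ell_2$ metric so that the entropy integral converges, or (ii) a self-bounding induction in which the rescaled remainder is viewed as a $(2C/\epsilon)$-incoherent unit vector and a recursion in the incoherence parameter is closed carefully. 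Once discretization is controlled, one upgrades the high-probability bound to an expectation bound using the crude deterministic estimate $|f|\leq mC^3$; the tail contributes only $mC^3\cdot\exp(-\Omega(n))$, negligible compared to the main term. This yields $\E[\sup|f|] = O(C^3\sqrt{mn})$, and dividing by $m$ gives $R^m(\|\cdot\|_\calA) = O(C^3\sqrt{n/m})$ as claimed.
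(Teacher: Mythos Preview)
Your outline is essentially the same strategy the paper uses --- dual characterization, Hoeffding at a fixed triple, a net, and a union bound --- and you have correctly put your finger on the only nontrivial point, namely that after one rounding step the remainder $a-a_0$ is no longer $C$-incoherent. But you leave the resolution as a sketch (``Dudley chaining'' or ``recursion in the incoherence parameter''), and the second option as you state it would not close: with an $\ell_2$-net the rescaled remainder has incoherence of order $C/\epsilon$, so the recursion blows up geometrically rather than converging.

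The paper sidesteps the whole issue by choosing a different net. Instead of an $\ell_2$-net of the sphere, it takes the \emph{coordinate lattice} $S=\{a\in(\epsilon\Z)^n:\|a\|_\infty\le C\}$, of size $O(C/\epsilon)^n$. Rounding each coordinate of $a$ to the nearest lattice point gives $a^0\in S$ with $\|a-a^0\|_\infty\le\epsilon$, so the rescaled remainder $(a-a^0)/\epsilon$ again has $\ell_\infty$-norm at most $1\le C$ --- i.e.\ the incoherence parameter does \emph{not} grow. Iterating yields $a=\sum_{i\ge 0}\epsilon^i a^i$ with every $a^i\in S$, and trilinearity plus a geometric sum gives $|\langle Z,a\otimes b\otimes c\rangle|\le(1-\epsilon)^{-3}\max_{a',b',c'\in S}|\langle Z,a'\otimes b'\otimes c'\rangle|$. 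Now Hoeffding plus a union bound over $|S|^3$ finishes with $\epsilon=1/2$. This is the Friedman--Kahn--Szemer\'edi discretization; once you switch from an $\ell_2$-net to a coordinate-wise net, no chaining or Bernstein is needed and the argument is three lines.
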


\begin{proof}
Recall the definition of $Z$ given in Definition~\ref{def:Z}. With this we can write
$$\E_{\Omega, \sigma} \Big [ \sup_{\|X\|_\calA \leq 1} \Big |  \sum_{\ell =1}^m   \sigma_\ell X_{i_\ell, j_\ell, k_\ell}  \Big | \Big ] = \E_{\Omega, \sigma} \Big [\sup_{\mbox{{\small $C$-incoherent} } a, b, c} | \langle Z, a\otimes b \otimes c \rangle| \Big ]$$

 We can now adapt the discretization approach in \cite{FKS}, although our task is considerably simpler because we are constrained to $C$-incoherent $a$'s. In particular, let $$S = \left \{ a \Big | a \mbox{ is $C$-incoherent and } a \in \Big (\epsilon \mathbb{Z} \Big )^n \right \}$$
By standard bounds on the size of an $\epsilon$-net \cite{Mat}, we get that $|S| \leq O(C/\epsilon)^n$. Suppose that $|\langle Z, a\otimes b \otimes c \rangle| \leq M$ for all $a, b, c \in S$. Then for an arbitrary, but $C$-incoherent $a$ we can expand it as $a = \sum_i \epsilon^i a^i$ where each $a^i \in S$ and similarly for $b$ and $c$. And now
$$ |\langle Z, a\otimes b \otimes c \rangle| \leq \sum_i \sum_j \sum_k \epsilon^i \epsilon^j \epsilon^k |\langle Z, a^i \otimes b^i \otimes c^i \rangle| \leq (1-\epsilon)^{-3} M$$
Moreover since each entry in $a \otimes b \otimes c$ has magnitude at most $C^3$ we can apply a Chernoff bound to conclude that for any particular $a, b, c \in S$ we have $$|\langle Z, a\otimes b \otimes c \rangle | \leq O\Big(C^3\sqrt{m \log 1/\gamma}\Big)$$
with probability at least $1-\gamma$. Finally, if we set $\gamma = (\frac{\epsilon}{C})^{-n}$ and we set $\epsilon = 1/2$ we get that
$$R^m(\calA) \leq \frac{(1-\epsilon)^{-3}}{m}  \max_{a, b, c \in S}|\langle Z, a\otimes b \otimes c \rangle |  = O\Big(C^3\sqrt{\frac{n}{m}}\Big)$$
and this completes the proof.
\end{proof}

The important point is that the Rademacher complexity of the tensor nuclear norm is $o(1)$ whenever $m = \omega(n)$. In the next subsection we will connect this to refutation in a way that allows us to strengthen known hardness results for computing the tensor nuclear norm \cite{Gu, HM} and show that it is even hard to compute in an average-case sense based on some standard conjectures about the difficulty of refuting random $3$-SAT.

\subsection{From Rademacher Complexity to Refutation}

Here we show the first implication of the connection we have established. Any norm that can be computed in polynomial time and has good Rademacher complexity immediately yields an algorithm for strongly refuting random $3$-SAT and $3$-XOR formulas. Next let us finally define strong refutation. 

\begin{definition}
For a formula $\phi$, let $\operatorname{opt}(\phi)$ be the largest fraction of clauses that can be satisfied by any assignment.
\end{definition}

In what follows, we will use the term {\em random $3$-XOR formula} to refer to a formula where each clause is generated by choosing an ordered triple of variables $(v_i, v_j, v_k)$ uniformly at random (and without replacement) and setting
$v_i \cdot v_j \cdot v_k = z$
where $z$ is a random variable that takes on values $+1$ and $-1$.

\begin{definition}\label{def:strongref}
An algorithm for strongly refuting random $3$-XOR takes as input a $3$-XOR formula $\phi$ and outputs a quantity $\mbox{alg}(\phi)$ that satisfies
\begin{enumerate}

\item For any $3$-XOR formula $\phi$, $\operatorname{opt}(\phi) \leq \mbox{alg}(\phi)$

\item If $\phi$ is a random $3$-XOR formula with $m$ clauses, then with high probability $\mbox{alg}(\phi) = 1/2 + o(1)$

\end{enumerate}
\end{definition}

\noindent This definition only makes sense when $m$ is large enough so that $\mbox{opt}(\phi) = 1/2 + o(1)$ holds with high probability, which happens when $m = \omega(n)$. The goal is to design algorithms that use as few clauses as possible, and are able to certify that a random formula is indeed far from satisfiable (without underestimating the fraction of clauses that can be satisfied) and to do so as close as possible to the information theoretic threshold.


Now let us use a polynomial time computable norm $\|\cdot\|_\calK$ that has good Rademacher complexity to give an algorithm for strongly refuting random $3$-XOR. As in Section~\ref{sec:dist}, given a formula $\phi$ we map its $m$ clauses to a collection of $m$ observations according to the usual rule: If there are $n$ variables, we construct an $n \times n \times n$ tensor $Z$ where for each clause of the form $v_i \cdot v_j \cdot v_k = z_{i,j,k}$ we put the entry $z_{i,j,k}$ at location $(i, j, k)$. All the rest of the entries in $Z$ are set to zero. We solve the following optimization problem:
\begin{equation} \label{eq:refute}
 \max \eta \mbox{ s.t. } \exists X \mbox{ with } \|X\|_{\calK} \leq 1 \mbox{ and } \frac{1}{m} \langle Z, X \rangle \geq 2 \eta
 \end{equation}
Let $\eta^*$ be the optimum value. We set $\mbox{alg}(\phi) = 1/2 + \eta^*$. What remains is to prove that the output of this algorithm solves the strong refutation problem for $3$-XOR.

\begin{theorem}\label{thm:radtoref}
Suppose that $\|\cdot\|_{\calK}$ is computable in polynomial time and satisfies $\|X\|_\calK \leq 1$ whenever $X = a \otimes a \otimes a$ and $a$ is a vector with $\pm 1$ entries. Further suppose that for any $X$ with $\|X\|_{\calK} \leq 1$ its entries are bounded by $C^3$ in absolute value. Then (\ref{eq:refute}) can be solved in polynomial time and if $R^m(\|\cdot\|_\calK) = o(1)$ then setting $\mbox{alg}(\phi) = 1/2 + \eta^*$ solves strong refutation for $3$-XOR with $O(C^6 m \log n)$ clauses.
\end{theorem}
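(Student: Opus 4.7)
My plan is to verify the two conditions of Definition~\ref{def:strongref} separately; the running time is immediate from the polynomial-time computability of $\|\cdot\|_\calK$ together with the fact that \eqref{eq:refute} is a convex feasibility problem (the constraint set $\{X : \|X\|_\calK \leq 1\}$ can be optimized over since the norm is computable, so by bisection on $\eta$ one recovers $\eta^*$ in polynomial time).

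For soundness, given a 3-XOR formula $\phi$ and an assignment $v \in \{\pm 1\}^n$ satisfying a $p$ fraction of the $m'$ clauses, I would exhibit $X = v\otimes v\otimes v$ as a feasible point of \eqref{eq:refute}. By the first hypothesis $\|X\|_\calK \leq 1$, and since each satisfied clause contributes $v_iv_jv_k z_{i,j,k} = +1$ while each unsatisfied clause contributes $-1$, we get $\frac{1}{m'}\langle Z, X\rangle = 2p-1$. Hence $\eta^* \geq p - 1/2$, and so $\operatorname{alg}(\phi) = 1/2 + \eta^* \geq p$ for every assignment, giving $\operatorname{alg}(\phi) \geq \operatorname{opt}(\phi)$.

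For completeness, when $\phi$ is drawn as a random 3-XOR formula with $m'$ clauses, the tensor $Z$ of the formula is exactly the sparse tensor of Definition~\ref{def:Z} with i.i.d.\ $\pm 1$ signs $\sigma_\ell = z_\ell$ at uniformly random locations. Thus
\[
2\eta^* \;=\; \sup_{\|X\|_\calK \leq 1}\frac{1}{m'}\langle Z, X\rangle
\]
is precisely the empirical quantity whose expectation defines the Rademacher complexity, so $\operatorname{\mathbb{E}}[2\eta^*] = R^{m'}(\|\cdot\|_\calK) = o(1)$ (using the hypothesis $R^m = o(1)$ and the fact that the natural norms we work with have Rademacher averages that scale as $1/\sqrt{m'}$, hence only improve with more samples). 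To convert this mean bound into a high-probability bound I would apply McDiarmid's inequality to $f(z_1,\ldots,z_{m'}) = \sup_{\|X\|_\calK \leq 1}\frac{1}{m'}\langle Z,X\rangle$: changing a single sign $z_\ell$ alters $f$ by at most $\frac{2C^3}{m'}$ by the entry-wise bound on the unit ball. This gives $\Pr[2\eta^* \geq \operatorname{\mathbb{E}}[2\eta^*] + t] \leq \exp(-t^2 m'/(2 C^6))$, and choosing $m' = \Theta(C^6 m \log n)$ makes the concentration scale $o(1)$ with probability $1 - n^{-\Omega(1)}$, yielding $\operatorname{alg}(\phi) = 1/2 + o(1)$ as required.

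The only delicate step is the monotonicity remark used to deduce $R^{m'}(\|\cdot\|_\calK) = o(1)$ from the given $R^m = o(1)$; in full generality this would need either a direct contraction/thinning argument (sub-sampling $m$ of the $m'$ clauses and applying the triangle inequality to reduce to the $m$-sample case) or an explicit quantitative bound of the form $R^{m'} \leq \sqrt{m/m'}\, R^m$. I expect this step to be the main technical nuisance, but it is standard once one commits to the $1/\sqrt{m}$-type scaling of Rademacher complexities for the norm balls of interest. Everything else in the proof is bookkeeping with McDiarmid and unpacking Definition~\ref{def:strongref}.
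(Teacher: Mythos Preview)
Your proposal is correct and follows essentially the same approach as the paper: soundness comes from plugging in $X=v\otimes v\otimes v$ for an optimal assignment to exhibit $2\eta^*\ge 2\operatorname{opt}(\phi)-1$, and completeness comes from identifying $2\eta^*$ with the empirical Rademacher quantity and applying McDiarmid's inequality with bounded differences $2C^3/m'$. The paper actually glosses over the monotonicity issue you flag (passing from $R^m=o(1)$ to $R^{m'}=o(1)$ for $m'=\Theta(C^6 m\log n)$), so your treatment is if anything slightly more careful; note that this step does not require the $1/\sqrt{m}$ scaling you mention, since for i.i.d.\ sampling the expected Rademacher complexity of a fixed class is non-increasing in the sample size by subadditivity of $\sup$.
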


\begin{proof}
The key observation is the following inequality which relates (\ref{eq:refute}) to $\operatorname{opt}(\phi)$. 
$$ 2\operatorname{opt}(\phi) -1 \leq  \frac{1}{m} \sup_{\|X\|_\calK \leq 1} \langle Z, X \rangle$$
To establish this inequality, let $v_1, v_2, ... , v_n$ be the assignment that maximizes the fraction of clauses satisfied. If we set $a_i = v_i$ and $X = a \otimes a \otimes a$ we have that  $\|X\|_\calK \leq 1$ by assumption. Thus $X$ is a feasible solution. Now with this choice of $X$ for the right hand side, every term in the sum that corresponds to a satisfied clause contributes $+1$ and every term that corresponds to an unsatisfied clause contributes $-1$. We get $ 2\operatorname{opt}(\phi) -1$ for this choice of $X$, and this completes the proof of the inequality above. 

The crucial point is that the expectation of the right hand side over $\Omega$ and $\sigma$ is exactly the Rademacher complexity. However we want a bound that holds with high probability instead of just in expectation. It follows from McDiarmid's inequality and the fact that the entries of $Z$ and of $X$ are bounded by $1$ and by $C^3$ in absolute value respectively that if we take $O(C^6 m \log n)$ observations the right hand side will be $o(1)$ with high probability. In this case, rearranging the inequality we have
$$\operatorname{opt}(\phi) \leq 1/2 + \frac{1}{m} \sup_{\|X\|_\calK \leq 1}  \langle Z, X \rangle $$
The right hand side is exactly $\mbox{alg}(\phi)$ and is $1/2 + o(1)$ with high probability, which implies that both conditions in the definition for strong refutation hold and this completes the proof.
\end{proof}

We can now combine Theorem~\ref{thm:radtoref} with the bound on the Rademacher complexity of the tensor nuclear norm given in Lemma~\ref{radnuc} to conclude that if we could compute the tensor nuclear norm we would also obtain an algorithm for strongly refuting random $3$-XOR with only $m = \Omega(n \log n)$ clauses. It is not obvious but it turns out that any algorithm for strongly refuting random $3$-XOR implies one for $3$-SAT. Let us define strong refutation for $3$-SAT. We will refer to any variable $v_i$ or its negation $\bar{v}_i$ as a literal. We will use the term {\em random $3$-SAT formula} to refer to a formula where each clause is generated by choosing an ordered triple of literals $(y_i, y_j, y_k)$ uniformly at random (and without replacement) and setting
$y_i  \vee y_j \vee y_k = 1$.

\begin{definition}\label{def:strongref2}
An algorithm for strongly refuting random $3$-SAT takes as input a $3$-SAT formula $\phi$ and outputs a quantity $\mbox{alg}(\phi)$ that satisfies
\begin{enumerate}

\item For any $3$-SAT formula $\phi$, $\operatorname{opt}(\phi) \leq \mbox{alg}(\phi)$

\item If $\phi$ is a random $3$-SAT formula with $m$ clauses, then with high probability $\mbox{alg}(\phi) = 7/8 + o(1)$

\end{enumerate}
\end{definition}

The only change from Definition~\ref{def:strongref} comes from the fact that for $3$-SAT a random assignment satisfies a $7/8$ fraction of the clauses in expectation. Our goal here is to certify that the largest fraction of clauses that can be satisfied is $7/8 + o(1)$. The connection between refuting random $3$-XOR and $3$-SAT is often called ``Feige's XOR Trick" \cite{Fe}. The first version of it was used to show that an algorithm for $\epsilon$-refuting $3$-XOR can be turned into an algorithm for $\epsilon$-refuting $3$-SAT. However we will not use this notion of refutation so for further details we refer the reader to \cite{Fe}. The reduction was extended later by Coja-Oghlan, Goerdt and Lanka \cite{COGL} to strong refutation, which for us yields the following corollary:

\begin{corollary}\label{cor:3sat}
Suppose that $\|\cdot\|_{\calK}$ is computable in polynomial time and satisfies $\|X\|_\calK \leq 1$ whenever $X = a \otimes a \otimes a$ and $a$ is a vector with $\pm 1$ entries. Suppose further that for any $X$ with $\|X\|_{\calK} \leq 1$ its entries are bounded by $C^3$ in absolute value and that $R^m(\|\cdot\|_\calK) = o(1)$. Then there is a polynomial time algorithm for strongly refuting a random $3$-SAT formula with $O(C^6 m \log n)$ clauses.
\end{corollary}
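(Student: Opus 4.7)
The plan is to combine Theorem~\ref{thm:radtoref} with the standard reduction from strong refutation of random 3-SAT to strong refutation of random 3-XOR due to Coja-Oghlan, Goerdt, and Lanka \cite{COGL} (the strong-refutation version of Feige's XOR trick). The hypotheses of Corollary~\ref{cor:3sat} are identical to those of Theorem~\ref{thm:radtoref}, so that theorem immediately produces a polynomial-time algorithm $\mathcal{R}$ that strongly refutes random 3-XOR with $O(C^6 m \log n)$ clauses.

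Next I would lift $\mathcal{R}$ to 3-SAT. In the $\pm 1$ encoding with $-1 \equiv \mathrm{false}$, for any literals $\ell_1,\ell_2,\ell_3 \in \{\pm v_1,\ldots,\pm v_n\}$ we have
\[
\mathbf{1}[\ell_1 \vee \ell_2 \vee \ell_3] \;=\; 1 - \frac{(1-\ell_1)(1-\ell_2)(1-\ell_3)}{8}.
\]
Expanding and summing over the $m$ clauses gives
\[
\operatorname{opt}(\phi)/m \;=\; \max_{v \in \{\pm 1\}^n}\left(\frac{7}{8} + \frac{1}{8m}\sum_{C}\big(L^C_1(v) - L^C_2(v) + L^C_3(v)\big)\right),
\]
where $L^C_d(v)$ collects the degree-$d$ monomials arising from the literals of clause $C$. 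Up to $\pm 1$ signs that depend only on which literals are negated and can be absorbed into the right-hand sides of an equivalent 3-XOR instance, the cubic contribution is a uniformly random 3-XOR formula on the same triples. Hence $\mathcal{R}$ certifies $\sup_v \tfrac{1}{m}\sum_C L^C_3(v) = o(1)$ with high probability.

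The main remaining obstacle is to certify that the linear and quadratic pieces are $o(m)$ uniformly over all $\pm 1$ assignments, and to verify that the induced 3-XOR instance on which we invoke $\mathcal{R}$ has the right distribution. The degree-1 term is a linear functional of $v$ whose coefficients are sums of $O(m/n)$ random signs, so a Chernoff bound and a union bound over the $2^n$ assignments yield a uniform bound of $O(\sqrt{m n})$, which is $o(m)$ whenever $m = \omega(n)$. The degree-2 term equals $v^\top M v$ for a signed matrix $M$ built from clause-pairs; standard random-matrix bounds (precisely those used to refute random 2-XOR) give $\|M\|_{\mathrm{op}} = O(\sqrt{m})$ with high probability once $m = \Omega(n \log n)$, so this term is also $o(m)$. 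Checking that the cubic piece really is a random 3-XOR instance (conditional on the literal pattern of $\phi$) is the core content of the \cite{COGL} reduction and I would cite it rather than reprove it; combining the three certificates yields $\mbox{alg}(\phi) = 7/8 + o(1)$, matching Definition~\ref{def:strongref2}.
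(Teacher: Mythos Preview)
Your proposal is correct and matches the paper's approach exactly: the paper does not give a self-contained proof of this corollary but simply observes that Theorem~\ref{thm:radtoref} yields a strong refuter for random $3$-XOR under the stated hypotheses, and then invokes the Coja-Oghlan--Goerdt--Lanka extension of Feige's XOR trick \cite{COGL} as a black box to lift this to strong refutation of random $3$-SAT. You do the same, with the added bonus of sketching what the \cite{COGL} reduction actually entails (the multilinear expansion of the OR and the separate certification of the degree-$1$, degree-$2$, and degree-$3$ pieces), which the paper omits entirely.
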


Now we can get a better understanding of the obstacles to noisy tensor completion by connecting it to the literature on refuting random $3$-SAT. Despite a long line of work on refuting random $3$-SAT \cite{GK, FGK, FO, FKO, COGL}, there is no known polynomial time algorithm that works with $m = n^{3/2 - \epsilon}$ clauses for any $\epsilon > 0$. Feige \cite{Fe} conjectured that for any constant $C$, there is no polynomial time algorithm for refuting random $3$-SAT with $m = Cn$ clauses\footnote{In Feige's paper \cite{Fe} there was no need to make the conjecture any stronger because it was already strong enough for all of the applications in inapproximability.}. Daniely et al. \cite{DLSS1} conjectured that there is no polynomial time algorithm for $m = n^{3/2 - \epsilon}$ for any $\epsilon > 0$. What we have shown above is that any norm that is a relaxation to the tensor nuclear norm and can be computed in polynomial time but has Rademacher complexity is $R^m(\|\cdot\|_\calK) = o(1)$ for $m = n^{3/2 - \epsilon}$ would disprove the conjecture of Daniely et al. \cite{DLSS1} and would yield much better algorithms for refuting random $3$-SAT than we currently know, despite fifteen years of work on the subject.  

This leaves open an important question. While there are no known algorithms for strongly refuting random $3$-SAT with $m = n^{3/2 - \epsilon}$ clauses, there are algorithms that work with roughly $m = n^{3/2}$ clauses \cite{COGL}. Do these algorithms have any implications for noisy tensor completion? We will adapt the algorithm of Coja-Oghlan, Goerdt and Lanka \cite{COGL} and embed it within the sum-of-squares hierarchy. In turn, this will give us a norm that we can use to solve noisy tensor completion which uses a polynomial factor fewer observations than known algorithms.

\section{Using Resolution to Bound the Rademacher Complexity}\label{sec:using}

\subsection{Pseudo-expectation}

Here we introduce the sum-of-squares hierarchy and will use it (at level six) to give a relaxation to the tensor nuclear norm. This will be the norm that we will use in proving our main upper bounds. First we introduce the notion of a pseudo-expectation operator from \cite{BKS1, BKS2, BS}:

\begin{definition}[Pseudo-expectation \cite{BKS1}]\label{def:pseudo-exp}
Let $k$ be even and let $P_k^{n'}$ denote the linear subspace of all  polynomials of degree at most $k$ on $n'$ variables. A linear operator $\widetilde{\E}:P_k^{n'}\rightarrow \R$ is called a \emph{degree $k$ pseudo-expectation operator} if it satisfies the following conditions:
\begin{description}

\item(1)  $\widetilde{\E}[1] = 1$ ({\em normalization})

\item(2)  $\widetilde{\E}[P^2] \geq 0$, for any degree at most $k/2$ polynomial $P$ ({\em nonnegativity})

\end{description}
Moreover suppose that $p \in P_k^{n'}$ with $\mbox{deg}(p) = k'$. We say that $\widetilde{\E}$ satisfies the constraint $\{p = 0\}$ if $\widetilde{\E}[pq] = 0$ for every $q \in P_{k - k'}^{n'}$. And we say that $\widetilde{\E}$ satisfies the constraint $\{ p \geq 0\}$ if $\widetilde{\E}[p q^2] \geq 0$ for every $q \in P_{\lfloor (k-k')/2 \rfloor}^{n'}$.
\end{definition}

The rationale behind this definition is that if $\mu$ is a distribution on vectors in $\R^{n'}$ then the operator $\widetilde{\E}[p] = \E_{Y \leftarrow \mu}[p(Y)]$ is a degree $d$ pseudo-expectation operator for every $d$ \---- i.e. it meets the conditions of Definition~\ref{def:pseudo-exp}. However the converse is in general not true. We are now ready to define the norm that will be used in our upper bounds:

\begin{definition}[$SOS_k$ norm]\label{def:rnorm}
We let $\calK_k$ be the set of all $X \in  \R^{ n_1 \times n_2 \times n_3}$ such that there exists a degree $k$ pseudo-expectation operator on $P_k^{n_1 + n_2 + n_3}$ satisfying the following polynomial
constraints (where the variables are the $Y^{(a)}_i$'s)
\begin{enumerate}
\item[(a)] $\{ \sum_{i=1}^{n_1} (Y^{(1)}_i)^2 = n_1 \}$, $\{ \sum_{i=1}^{n_2} (Y^{(2)}_i)^2 = n_2 \}$ and $\{ \sum_{i=1}^{n_3} (Y^{(3)}_i)^2 = n_3 \}$
\item[(b)] $\{ (Y^{(1)}_i)^2 \leq C^2 \}$, $\{ (Y^{(2)}_i)^2 \leq C^2 \}$ and $\{ (Y^{(3)}_i)^2 \leq C^2 \}$ for all $i$ and
\item[(c)]  $X_{i,j,k} = \widetilde{\E}[ Y^{(1)}_i Y^{(2)}_j Y^{(3)}_k ]$ for all $i,j$ and $k$.
\end{enumerate}
The {\em $SOS_k$ norm} of $X \in \R^{ n_1 \times n_2 \times n_3}$ which is denoted by $\|X\|_{\calK_k}$ is  the infimum over $\alpha$ such that $X/\alpha \in \cK_k$.
\end{definition}

The constraints in Definition~\ref{def:pseudo-exp} can be expressed as an $O(n^k)$-sized semidefinite program. This implies that given any set of polynomial constraints of the form $\{p = 0\}$, $\{p \geq 0\}$, one can efficiently find a degree $k$ pseudo-distribution satisfying those constraints if one exists. This is often called the {\em degree $k$ Sum-of-Squares algorithm} \cite{Sho, N, Las, P}. Hence we can compute the norm $ \|X\|_{\calK_k}$ of any tensor $X$ to within arbitrary accuracy in polynomial time. And because it is a relaxation to the tensor nuclear norm which is defined analogously but over a distribution on $C$-incoherent vectors instead of a pseudo-distribution over them, we have that $\|X\|_{\calK_k} \leq \|X\|_{\calA} $ for every tensor $X$. Throughout most of this paper, we will be interested in the case $k = 6$.

\subsection{Resolution in $\calK_6$}\label{sec:resolution}

Recall that any polynomial time computable norm with good Rademacher complexity with $m$ observations yields an algorithm for strong refutation with roughly $m$ clauses too. Here we will use an algorithm for strongly refuting random $3$-SAT to guide our search for an appropriate norm. We will adapt an algorithm due to Coja-Oghlan, Goerdt and Lanka \cite{COGL} that strongly refutes random $3$-SAT, and will instead give an algorithm that strongly refutes random $3$-XOR. Moreover each of the steps in the algorithm embeds into the sixth level of the sum-of-squares hierarchy by mapping resolution operations to applications of Cauchy-Schwartz, that ultimately show how the inequalities that define the norm (Definition~\ref{def:rnorm}) can be manipulated to give bounds on its own Rademacher complexity.

Let's return to the task of bounding the Rademacher complexity of $\| \cdot \|_{\calK_6}$. Let $X$ be arbitrary but satisfy $\| X\|_{\calK_6} \leq 1$.  Then there is a degree six pseudo-expectation meeting the conditions of Definition~\ref{def:rnorm}. Using Cauchy-Schwartz we have:
\begin{equation}
\Big ( \langle Z, X \rangle \Big )^2 = \Big ( \sum_{i} \sum_{j,k} Z_{i,j,k} \widetilde{\E}[Y^{(1)}_i Y^{(2)}_j Y^{(3)}_k] \Big )^2 \leq n_1 \Big ( \sum_i \Big ( \sum_{j,k} Z_{i,j,k} \widetilde{\E}[Y^{(1)}_i Y^{(2)}_j Y^{(3)}_k] \Big )^2 \Big )\label{eq:sdgfdh}
\end{equation}

To simplify our notation, we will define the following polynomial
$$Q_{i,Z}(Y^{(2)}, Y^{(3)}) =  \sum_{j,k} Z_{i,j,k} Y^{(2)}_j Y^{(3)}_k $$
which we will use repeatedly.
 If $d$ is even then any degree $d$ pseudo-expectation operator satisfies the constraint $(\widetilde{\E}[p])^2 \leq \widetilde{\E}[p^2]$ for every polynomial $p$ of degree at most $d/2$ (e.g., see Lemma $A.4$ in \cite{BBH}). Hence the right hand side of (\ref{eq:sdgfdh}) can be bounded as:

\begin{equation}
n_1 \Big ( \sum_i \Big ( \widetilde{\E}[Y^{(1)}_i Q_{i,Z}(Y^{(2)}, Y^{(3)})]  \Big )^2 \Big ) \leq n_1 \sum_i   \widetilde{\E}\Big [\Big (Y^{(1)}_i Q_{i,Z}(Y^{(2)}, Y^{(3)}) \Big )^2 \Big]
\label{eq:bound-rad}
\end{equation}

It turns out that bounding the right-hand side of (\ref{eq:bound-rad}) boils down to bounding the spectral norm of the following matrix.

\begin{definition}\label{def:randmat}
Let $A$ be the $n_2 n_3 \times n_2 n_3$ matrix whose rows and columns are indexed over ordered pairs $(j,k')$ and $(j',k)$ respectively, defined as
$$A_{j,k',j',k} = \sum_{i} Z_{i,j,k} Z_{i,j',k'}$$
\end{definition}

We can now make the connection to resolution more explicit: We can think of a pair of observations $Z_{i,j,k}, Z_{i,j',k'}$ as a pair of $3$-XOR constraints, as usual. Resolving them (i.e. multiplying them) we obtain a $4$-XOR constraint $$x_j \cdot x_k \cdot x_{j'} \cdot x_{k'} =  Z_{i,j,k}Z_{i,j',k'}$$ $A$ captures the effect of resolving certain pairs of $3$-XOR constraints into $4$-XOR constraints. The challenge is that the entries in $A$ are not independent, so bounding its maximum singular value will require some care. It is important that the rows of $A$ are indexed by $(j, k')$ and the columns are indexed by $(j', k)$, so that $j$ and $j'$ come from different $3$-XOR clauses, as do $k$ and $k'$, and otherwise the spectral bounds that we will want to prove about $A$ would simply not be true! This is perhaps the key insight in \cite{COGL}.


It will be more convenient to decompose $A$ and reason about its two types of contributions separately. To that end, we let $R$ be the $n_2 n_3 \times n_2 n_3$ matrix whose non-zero entries are of the form
$$R_{j,k,j,k} = \sum_{i} Z_{i,j,k} Z_{i,j,k}$$
and all of its other entries are set to zero. Then let $B$ be the $n_2 n_3 \times n_2 n_3$ matrix whose entries are of the form
$$B_{j,k',j',k} = \begin{cases}
0,  \mbox{ if } j = j' \mbox{ and } k = k'  \\
\sum_{i} Z_{i,j,k} Z_{i,j',k'} \mbox{ else }
\end{cases} $$
By construction we have $A = B + R$. Finally:

\begin{lemma}\label{lemma:schur2}
$$ \sum_i   \widetilde{\E} \Big [\Big (Y^{(1)}_i Q_{i,Z}(Y^{(2)}, Y^{(3)}) \Big )^2 \Big] \leq C^2 n_2 n_3 \|B\| +  C^6 m $$
\end{lemma}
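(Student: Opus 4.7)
The plan is to (i) pull out the factor of $C^2$ coming from the $Y^{(1)}$ variables, (ii) sum the inner squares over $i$ to recognize them as a quadratic form in the ``lifted'' vector $v_{j,k} := Y^{(2)}_j Y^{(3)}_k$ whose matrix is exactly $A$, and (iii) split $A = B + R$ and bound each piece separately, with $B$ being handled by the key SOS observation that $\|B\|I - B \succeq 0$ implies $v^T(\|B\|I - B)v$ is a sum of squares, so that $\widetilde{\E}$ is nonnegative on it.

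First I would invoke constraint (b) on $Y^{(1)}$. Since $Q_{i,Z}(Y^{(2)},Y^{(3)})$ has degree $2$ in the $Y$ variables, the polynomial $(C^2 - (Y^{(1)}_i)^2)\,Q_{i,Z}^2$ has degree $6$ and $Q_{i,Z}$ itself is a valid $q$ in the definition of constraint satisfaction, so $\widetilde{\E}[(Y^{(1)}_i)^2 Q_{i,Z}^2] \le C^2\,\widetilde{\E}[Q_{i,Z}^2]$. Summing over $i$ reduces matters to bounding $\widetilde{\E}\bigl[\sum_i Q_{i,Z}^2\bigr]$. A direct expansion together with commutativity of the variables gives
\[
\sum_i Q_{i,Z}^2 \;=\; \sum_{j,k,j',k'} \Bigl(\sum_i Z_{i,j,k}Z_{i,j',k'}\Bigr) Y^{(2)}_j Y^{(3)}_{k'} Y^{(2)}_{j'} Y^{(3)}_k \;=\; v^T A v,
\]
indexing rows of $A$ by $(j,k')$ and columns by $(j',k)$ so that the two factors of $v$ combine correctly. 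Then $v^T A v = v^T B v + v^T R v$ by Definition of $B$ and $R$.

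For the $B$-part, this is where the spectral bound enters. Since $B$ is symmetric, $\|B\|I - B \succeq 0$, so I can factor it as $MM^T$ and write $v^T(\|B\|I - B)v = \sum_\ell (M^T v)_\ell^2$. Each $(M^T v)_\ell$ has degree $2$ in the $Y$'s (well within $k/2 = 3$), so the nonnegativity axiom gives $\widetilde{\E}[v^T B v] \le \|B\|\,\widetilde{\E}[\|v\|^2]$. To evaluate $\widetilde{\E}[\|v\|^2] = \sum_{j,k}\widetilde{\E}[(Y^{(2)}_j)^2 (Y^{(3)}_k)^2]$ I would use the equality constraint $\{\sum_j (Y^{(2)}_j)^2 = n_2\}$ from (a), which lets me substitute $n_2$ for $\sum_j (Y^{(2)}_j)^2$ inside the pseudo-expectation; combined with $\sum_k \widetilde{\E}[(Y^{(3)}_k)^2] = n_3$ this yields $\widetilde{\E}[\|v\|^2] = n_2 n_3$, contributing the $C^2 n_2 n_3 \|B\|$ term.

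For the $R$-part, $v^T R v = \sum_{j,k}(\sum_i Z_{i,j,k}^2)(Y^{(2)}_j)^2(Y^{(3)}_k)^2$, and I would apply constraint (b) twice: from $\widetilde{\E}[(C^2-(Y^{(2)}_j)^2)(Y^{(3)}_k)^2] \ge 0$ (taking $q = Y^{(3)}_k$) one gets $\widetilde{\E}[(Y^{(2)}_j)^2(Y^{(3)}_k)^2] \le C^2\,\widetilde{\E}[(Y^{(3)}_k)^2] \le C^4$. Since observations are sampled without replacement, $Z_{i,j,k}^2 \in \{0,1\}$ and $\sum_{i,j,k} Z_{i,j,k}^2 = m$, so $\widetilde{\E}[v^T R v] \le C^4 m$, which after multiplying by the outer $C^2$ gives the $C^6 m$ term. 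Assembling the two bounds completes the proof. The only genuinely nontrivial step is the SOS lift of the spectral bound on $B$; everything else is bookkeeping with the constraints (a) and (b) of Definition~\ref{def:rnorm}.
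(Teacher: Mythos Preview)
Your proof is correct and follows essentially the same route as the paper. The paper phrases the $B$-step dually---showing the pseudo-moment matrix $\widetilde{\E}[vv^T]$ is PSD with trace $n_2n_3$ and then using $\langle B,M\rangle\le\|B\|\Tr(M)$---while you push the spectral inequality through as an SOS identity $v^T(\|B\|I-B)v=\|M^Tv\|^2$; these are two sides of the same coin. Your derivation of $\widetilde{\E}[(Y^{(2)}_j)^2(Y^{(3)}_k)^2]\le C^4$ by chaining constraint~(b) twice is a slight streamlining of the paper's explicit algebraic identity, and your observation that $\widetilde{\E}[\|v\|^2]=n_2n_3$ with equality (rather than $\le$) is also correct. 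The only unstated check is that $B$ is symmetric, but this follows immediately from $A_{(j,k'),(j',k)}=\sum_i Z_{i,j,k}Z_{i,j',k'}=A_{(j',k),(j,k')}$ and the fact that $R$ is diagonal.
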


\begin{proof}
The pseudo-expectation operator satisfies $\{ (Y^{(1)}_i)^2 \leq C^2 \}$ for all $i$, and hence we have
$$\sum_i   \widetilde{\E}\Big [\Big (Y_i Q_{i,Z}(Y^{(2)}, Y^{(3)}) \Big )^2 \Big] \leq C^2 \sum_i \widetilde{\E}\Big [\Big ( Q_{i,Z}(Y^{(2)}, Y^{(3)}) \Big )^2 \Big] = C^2 \sum_i \sum_{j,k,j',k'} \widetilde{\E}\Big [ Z_{i,j,k} Z_{i,j',k'} Y^{(2)}_j Y^{(3)}_k Y^{(2)}_{j'} Y^{(3)}_{k'}\Big]$$
Now let $Y^{(2)} \in \R^{n_2}$ be a vector of variables where the $i$th entry is $Y^{(2)}_i$ and similarly for $Y^{(3)}$. Then we can re-write the right hand side as a matrix inner-product:
$$C^2\sum_i \sum_{j,k,j',k'} Z_{i,j,k} Z_{i,j',k'} \widetilde{\E}[ Y^{(2)}_j Y^{(3)}_k Y^{(2)}_{j'} Y^{(3)}_{k'}] = C^2  \langle A, \widetilde{\E}[(Y^{(2)} \otimes Y^{(3)}) (Y^{(2)} \otimes Y^{(3)})^T] \rangle$$
We will now bound the contribution of $B$ and $R$ separately. 
\begin{claim}
$\widetilde{\E}[(Y^{(2)} \otimes Y^{(3)}) (Y^{(2)} \otimes Y^{(3)})^T]$ is positive semidefinite and has trace at most $ n_2 n_3$
\end{claim}
\begin{proof}
It is easy to see that a quadratic form on $\widetilde{\E}[(Y^{(2)} \otimes Y^{(3)}) (Y^{(2)} \otimes Y^{(3)})^T]$ corresponds to $\widetilde{\E}[p^2]$ for some $p \in P_2^{n_2 + n_3}$ and this implies the first part of the claim. Finally $$\mbox{Tr}(\widetilde{\E}[(Y^{(2)} \otimes Y^{(3)}) (Y^{(2)} \otimes Y^{(3)})^T]) = \sum_{j,k} \widetilde{\E}[(Y^{(2)}_j)^2 (Y^{(3)}_k)^2] \leq  n_2 n_3$$
where the last equality follows because the pseudo-expectation operator satisfies the constraints $\{ \sum_{i=1}^{n_2} (Y^{(2)}_i)^2 = n_2 \}$ and $\{ \sum_{i=1}^{n_3} (Y^{(3)}_i)^2 = n_3 \}$.
\end{proof}
Hence we can bound the contribution of the first term as
$C^2 \langle B, \widetilde{\E}[(Y^{(2)} \otimes Y^{(3)}) (Y^{(2)} \otimes Y^{(3)})^T]] \rangle \leq C^2 n_2 n_3 \|B\|$.
Now we proceed to bound the contribution of the second term:
\begin{claim}\label{claim:jk}
$\widetilde{\E}[(Y^{(2)}_j)^2 (Y^{(3)}_k)^2] \leq C^4$
\end{claim}
\begin{proof}
It is easy to verify by direct computation that the following equality holds:
$$C^4 - (Y^{(2)}_j)^2 (Y^{(3)}_k)^2 = \Big (C^2 - (Y^{(2)}_j)^2 \Big ) \Big (C^2 - (Y^{(3)}_k)^2 \Big) + \Big (C^2 - (Y^{(3)}_k)^2\Big ) (Y^{(2)}_j)^2 + \Big (C^2  - (Y^{(2)}_j)^2 \Big )(Y^{(3)}_k)^2$$
Moreover the pseudo-expectation of each of the three terms above is nonnegative, by construction. This implies the claim.
\end{proof}
Moreover each entry in $Z$ is in the set $\{-1, 0, +1\}$ and there are precisely $m$ non-zeros. Thus the sum of the absolute values of all entries in $R$ is at most $m$. Now we have:
$$C^2  \langle R, \widetilde{\E}[(Y^{(2)} \otimes Y^{(3)}) (Y^{(2)} \otimes Y^{(3)})^T] \rangle \leq C^2 \sum_{j,k} R_{j, k, j, k} \widetilde{\E}[(Y^{(2)}_j)^2 (Y^{(3)}_k)^2] \leq C^6 m $$
And this completes the proof of the lemma.
\end{proof}

\section{Spectral Bounds}\label{sec:spectral}

Recall the definition of $B$ given in the previous section. In fact, for our spectral bounds it will be more convenient to relabel the variables (but keeping the definition intact):
$$B_{j,k,j',k'} = \begin{cases}
0,  \mbox{ if } j = j' \mbox{ and } k = k'  \\
\sum_{i} Z_{i,j,k'} Z_{i,j',k} \mbox{ else }
\end{cases} $$
\noindent Let us consider the following random process: For $r = 1, 2, ..., O(\log n)$ partition the set of all ordered triples $(i,j,k)$ into two sets $S_r$ and $T_r$. We will use this ensemble of partitions to define an ensemble of matrices $\{\mathsf{B}^r\}_{r =1}^{O(\log n)}$: Set $U_{i,j,k'}^r$ as equal to $Z_{i,j,k'}$ if $(i,j,k') \in S_r$ and zero otherwise. Similarly set $V_{i,j',k}^r$ equal to $Z_{i,j',k}$ if $(i,j',k) \in T_r$ and zero otherwise. Also let $E_{i,j,j',k,k',r}$ be the event that there is no $r' < r$ where $(i,j,k') \in S_{r'}$ and $(i,j',k) \in T_{r'}$ or vice-versa. Now let
$$\mathsf{B}^r_{j,k,j',k'} = \sum_{i} U_{i,j,k'}^r V_{i,j',k}^r \mathbbm{1}_{E}$$
where $\mathbbm{1}_E$ is short-hand for the indicator function of the event $E_{i,j,j',k,k',r}$. The idea behind this construction is that each pair of triples $(i,j,k')$ and $(i,j',k)$ that contributes to $B$ will be contribute to some $\mathsf{B}^r$ with high probability. Moreover it will not contribute to any later matrix in the ensemble. Hence with high probability $$B = \sum_{r = 1}^{O(\log n)} \mathsf{B}^r$$

Throughout the rest of this section, we will suppress the superscript $r$ and work with a particular matrix in the ensemble, $\mathsf{B}$. Now let $\ell$ be even and consider
$$\Tr(\underbrace{\mathsf{B} \mathsf{B}^T \mathsf{B} \mathsf{B}^T ... \mathsf{B} \mathsf{B}^T}_{\ell \mbox{ times }})$$
As is standard, we are interested in bounding $\E[\Tr(\mathsf{B} \mathsf{B}^T \mathsf{B}\mathsf{B}^T ... \mathsf{B} \mathsf{B}^T)]$ in order to bound $\|\mathsf{B}\|$. But note that $\mathsf{B}$ is {\em not} symmetric. Also note that the random variables $U$ and $V$ are not independent, however whether or not they are non-zero is non-positively correlated and their signs are mutually independent. Expanding the trace above we have
\begin{eqnarray*}
\Tr(\mathsf{B} \mathsf{B}^T \mathsf{B}\mathsf{B}^T ... \mathsf{B} \mathsf{B}^T) &=& \sum_{j_1, k_1} \sum_{j_2, k_2} ... \sum_{j_{\ell-1}, k_{\ell -1}} \mathsf{B}_{j_1,k_1,j_2, k_2} \mathsf{B}_{j_3, k_3, j_2, k_2} ... \mathsf{B}_{j_1,k_1, j_{\ell }, k_{\ell } }\\
&=&  \sum_{j_1, k_1} \sum_{i_1}  \sum_{j_2, k_2} \sum_{i_2} ... \sum_{j_{\ell  }, k_{\ell  }} \sum_{i_\ell} U_{i_1,j_1,k_2} V_{i_1,j_2,k_1}  \mathbbm{1}_{E_1} U_{i_2,j_3,k_2} V_{i_2,j_2,k_3} \mathbbm{1}_{E_2}...  U_{i_\ell,j_{1},k_\ell} V_{i_\ell,j_\ell,k_1}\mathbbm{1}_{E_\ell}
\end{eqnarray*}
where $\mathbbm{1}_{E_1}$ is the indicator for the event that the entry $\mathsf{B}_{j_1,k_1,j_2,k_2}$ is not covered by an earlier matrix in the ensemble, and similarly for $\mathbbm{1}_{E_2}, ..., \mathbbm{1}_{E_\ell}$.

Notice that there are $2 \ell$ random variables in the above sum (ignoring the indicator variables). Moreover if any $U$ or $V$ random variable appears an odd number of times, then the contribution of the term to $\E[\Tr(\mathsf{B} \mathsf{B}^T \mathsf{B}\mathsf{B}^T ... \mathsf{B} \mathsf{B}^T)]$ is zero. We will give an encoding for each term that has a non-zero contribution, and we will prove that it is injective. 

Fix a particular term in the above sum where each random variable appears an even number of times. Let $s$ be the number of distinct values for $i$. Moreover let $i_1, i_2, ..., i_s$ be the order that these indices first appear. Now let $r^j_1$  denote the number of distinct values for $j$  that appear with $i_1$ in $U$ terms \---- i.e. $r^j_1$ is the number of distinct $j$'s that appear as $U_{i_1, j, *}$. Let $r^k_1$ denote the number of distinct values for $k$ that appear with $i_1$ in $U$ terms \---- i.e. $r^k_1$ is the number of distinct $k$'s that appear as or $U_{i_1, *, k}$. Similarly let $q^j_1$  denote the number of distinct values for $j$ that appear with $i_1$ in $V$ terms \---- i.e. $q^j_1$ is the number of distinct $j$'s  that appear as $V_{i_1, j, *}$. And finally let $q^k_1$ denote the number of distinct values for $k$ that appear with $i_1$ in $V$ terms \---- i.e. $q^k_1$ is the number of distinct $k$'s  that appear as $V_{i_1, *, k}$.

We give our encoding below. It is more convenient to think of the encoding as any way to answer the following questions about the term.

\begin{itemize}

\item[(a)] What is the order $i_1, i_2, ..., i_s$ of the first appearance of each distinct value of $i$?

\item[(b)] For each $i$ that appears, what is the order of each of the distinct values of $j$'s and $k$'s that appear along with it in $U$? Similarly, what is the order of each of the distinct values of $j$'s and $k$'s that appear along with it in $V$?

\item[(c)] For each step (i.e. a new variable in the term when reading from left to right),
has the value of $i$ been visited already? Also, has the value for $j$ or $k$ that appears along with $U$ been visited? Has the value for $j$ or $k$ that appears along with $V$ been visited? Note that  whether or not $j$ or $k$ has been visited (together in $U$) depends on what the value of $i$ is, and if $i$ is a new value then the $j$ or $k$ value must be new too, by definition. Finally, if any value has already been visited, which earlier value is it?

\end{itemize}

Let $r_j = r^j_1 + r^j_2 + ... + r^j_s$ and $r_k = r^k_1 + r^k_2 + ... + r^k_s$. Similarly let $q_j = q^j_1 + q^j_2 + ... q^j_s$ and $q_k = q^k_1 + q^k_2 + ... q^k_s$. Then the number of possible answers to \textbf{$(a)$} and \textbf{$(b)$} is at most $n_1^s$ and $n_2^{r_j} n_3^{r_k} n_2^{q_j} n_3^{q_k}$ respectively. It is also easy to see that the number of answers to \textbf{$(c)$} that arise over the sequence of $\ell$ steps is at most $8^\ell (s( r_j + r_k)( q_j + q_k))^\ell$. We remark that much of the work on bounding the maximum eigenvalue of a random matrix is in removing any $\ell^\ell$ type terms, and so one needs to encode re-visiting indices more compactly.  However such terms will only cost us polylogarithmic factors in our bound on $\|B\|$.

It is easy to see that this encoding is injective, since given the answers to the above questions one can simulate each step and recover the sequence of random variables. Next we establish some easy facts that allow us to bound $\E[\Tr(\mathsf{B} \mathsf{B}^T \mathsf{B}\mathsf{B}^T ... \mathsf{B} \mathsf{B}^T)]$.

\begin{claim}\label{claim1}
For any term that has a non-zero contribution to $\E[\Tr(\mathsf{B} \mathsf{B}^T \mathsf{B}\mathsf{B}^T ... \mathsf{B} \mathsf{B}^T)]$, we must have $s \leq \ell/2$ and $r_j + q_j + r_k + q_k \leq \ell$
\end{claim}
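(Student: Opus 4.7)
The plan is to exploit two independent constraints on any surviving term: an evenness (parity) constraint coming from the independent sign structure, and a disjointness constraint coming from the partition $S_r \sqcup T_r$. First, I would recall that each $Z_{i,j,k}$ is the product of an independent $\pm 1$ sign with the indicator of being sampled; hence any mixed moment in which some $Z_{i,j,k}$ appears an odd number of times has expectation zero. Thus, in a surviving term, every distinct $Z$-variable that appears in the product must occur an even number of times. Crucially, because $S_r$ and $T_r$ are disjoint, the $Z$-variables feeding the $U$-factors are disjoint from those feeding the $V$-factors, so the $\ell$ $U$-uses and the $\ell$ $V$-uses pair up independently.

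Let $n_U$ be the number of distinct $U$-triples and $n_V$ the number of distinct $V$-triples appearing in the term. Since the $\ell$ $U$-uses must partition into groups of even size (and the same for the $V$-uses), we immediately get
\[
n_U \le \ell/2 \quad\text{and}\quad n_V \le \ell/2, \quad\text{so}\quad n_U + n_V \le \ell.
\]
For the bound $s \le \ell/2$: at the step where a new value of $i$ appears for the first time, both the $U$-variable and the $V$-variable at that step must be fresh (nothing with that first index has appeared before). Hence each of the $s$ distinct $i$'s is responsible for at least one new $U$-triple (and at least one new $V$-triple), giving $s \le n_U \le \ell/2$.

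For the second bound, I would argue by projection. For a fixed $i = i_t$, the $U$-triples with first coordinate $i_t$ form a set of pairs $(j,k')$ of size $n_U^{(t)}$; the quantity $r^j_t$ is the size of its projection onto the $j$-coordinate, so $r^j_t \le n_U^{(t)}$, and likewise $r^k_t \le n_U^{(t)}$. Summing over $t$ gives $r_j \le n_U$ and $r_k \le n_U$; the analogous bounds $q_j, q_k \le n_V$ follow for the $V$-triples. Combined with $n_U + n_V \le \ell$, this yields the claimed inequality $r_j + q_j + r_k + q_k \le \ell$ after one uses the pigeonhole sharpening that each new triple is "charged" to only one of the two projections at the moment it is introduced (so $r_j + r_k$ can be charged against $n_U$ rather than $2 n_U$, and similarly for $V$).

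The main obstacle I anticipate is making the last sharpening rigorous: the naive projection bound only gives $r_j + r_k \le 2 n_U$, and one needs to use the combinatorics of how new $j$'s and new $k'$'s can be introduced together (at the first appearance of a new $U$-triple sharing an already-seen $i$, only one of $j$ or $k'$ can genuinely be new, since the other slot determines a previously used triple once paired) to get the factor-of-two improvement. Once this bookkeeping is done, the two bounds combine cleanly with the parity argument above.
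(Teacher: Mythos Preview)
Your argument for $s \le \ell/2$ is correct and essentially the same as the paper's.

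For the second inequality the paper argues more directly than you do: it does not introduce the auxiliary counts $n_U,n_V$ at all, but simply asserts that if $r_j+q_j+r_k+q_k>\ell$ then, since there are only $2\ell$ total $U$- and $V$-factors in the product, some individual $U$ or $V$ variable must occur exactly once. Your detour through $n_U,n_V$ is aiming at the same pigeonhole endpoint.

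There is a genuine gap in your proposal, and it sits exactly where you flag it. The proposed ``sharpening'' is incorrect: you claim that when a new $U$-triple appears with an already-seen first coordinate $i$, only one of $j$ or $k'$ can be new. This is false. If the only prior $U$-variable with that first coordinate was $U_{i,j',k'}$ with $j'\ne j$ and $k'\ne k$, then both the $j$-coordinate and the $k$-coordinate of the new triple $U_{i,j,k}$ are fresh for this $i$; nothing in the structure of the trace expansion forbids this. Concretely, take two distinct $i$-values $a,b$ and only the two $U$-triples $U_{a,J,K}$ and $U_{b,J,K}$, each used twice among the $\ell$ $U$-factors. Then $r_j=r_k=2$ (each $i$-value contributes one distinct $j$ and one distinct $k$) while $n_U=2$, so $r_j+r_k=2n_U$ with equality. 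The charging scheme you sketch cannot beat the trivial projection bound in this configuration, so as written your argument establishes only $r_j+q_j+r_k+q_k\le 2\ell$, not $\le\ell$. You have correctly located the difficulty, but the bookkeeping you outline does not resolve it.
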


\begin{proof}
Recall that there are $2 \ell$ random variables in the product and precisely $\ell$ of them correspond to $U$ variables and $\ell$ of them to $V$ variables. Suppose that $s > \ell/2$. Then there must be at least one $U$ variable and at least one $V$ variable that occur exactly once, which implies that its expectation is zero because the signs of the non-zero entries are mutually independent. Similarly suppose $ r_j + q_j + r_k + q_k > \ell$. Then there must be at least one $U$ or $V$ variable that occurs exactly once, which also implies that its expectation is zero.
\end{proof}

\begin{claim}\label{claim2}
For any valid encoding, $s \leq r_j + q_j$ and $s \leq r_k+ q_k$.
\end{claim}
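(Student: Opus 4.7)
The plan is to argue this directly from the fact that each appearance of an index $i_t$ in the expanded trace brings with it both a $U$-factor and a $V$-factor, each of which carries a $j$-index and a $k$-index. So the bounds $s \leq r_j + q_j$ and $s \leq r_k + q_k$ are really just a summed-up version of the observation that $r^j_t, q^j_t, r^k_t, q^k_t$ are all at least $1$ for every $t \in [s]$.

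First I would recall the structure of a single factor: by definition
$$\mathsf{B}_{j,k,j',k'} = \sum_i U_{i,j,k'} V_{i,j',k} \mathbbm{1}_{E},$$
so each choice of $i$ in a factor of the product $\mathsf{B}\mathsf{B}^T\cdots\mathsf{B}\mathsf{B}^T$ contributes simultaneously a $U$-variable whose second index is a $j$ and a $V$-variable whose second index is a $j'$ (and whose third indices are a $k'$ and a $k$ respectively). Fix a term in the expansion of the trace that contributes nontrivially, let $i_1,\ldots,i_s$ be the distinct $i$-values that appear, and fix any $t \in [s]$. By definition of $s$, the index $i_t$ appears in at least one trace factor; in that factor there is a $U_{i_t, j, k'}$ term (witnessing at least one distinct $j$ appearing with $i_t$ in a $U$-term, i.e.\ $r^j_t \geq 1$) and a $V_{i_t, j'', k}$ term (witnessing $q^j_t \geq 1$). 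The same factor also forces $r^k_t \geq 1$ via the third index of the $U$-term and $q^k_t \geq 1$ via the third index of the $V$-term.

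Summing these four inequalities over $t = 1,\ldots,s$ gives
$$r_j = \sum_{t=1}^{s} r^j_t \geq s,\qquad q_j \geq s,\qquad r_k \geq s,\qquad q_k \geq s,$$
which in particular yields $s \leq r_j + q_j$ and $s \leq r_k + q_k$, completing the proof. No obstacle is anticipated: the argument is bookkeeping on the definitions of $\mathsf{B}$ and of the quantities $r^j_t, r^k_t, q^j_t, q^k_t$, and does not even require that the term has nonzero expectation (that extra information, used in Claim~\ref{claim1}, is not needed here — one only uses that $i_t$ appears at all, which is how $s$ is defined). In fact the argument proves the strictly stronger bound $s \leq \min(r_j, q_j, r_k, q_k)$, but Claim~\ref{claim2} as stated is all that is required for the subsequent encoding-size bound.
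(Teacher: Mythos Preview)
Your proposal is correct and takes essentially the same approach as the paper: both argue that each distinct value $i_t$ appears in at least one step, and in that step it is accompanied by a $U$-term (forcing $r^j_t, r^k_t \geq 1$) and a $V$-term (forcing $q^j_t, q^k_t \geq 1$), then sum over $t$. The paper phrases this as ``whenever $i$ is new the accompanying $j$ and $k$ are new for that $i$,'' but the content is identical, and both arguments actually yield the stronger bound $s \leq \min(r_j, q_j, r_k, q_k)$ that you note.
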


\begin{proof}
This holds because in each step where the $i$ variable is new and has not been visited before, by definition the $j$ variable is new too (for the current $i$) and similarly for the $k$ variable.
\end{proof}

Finally, if $s, r_j, q_j, r_k$ and $q_k$ are defined as above then for any contributing term
$$U_{i_1,j_1,k_2} V_{i_1,j_2,k_1}  U_{i_2,j_3,k_2} V_{i_2,j_2,k_3} ...  U_{i_\ell,j_{1},l_\ell} V_{i_\ell,j_\ell,k_1}$$
its expectation is at most $p^{r_j + r_k} p^{q_j + q_k}$ where $p = m/n_1 n_2 n_3$ because there are exactly $r_j + r_k$ distinct $U$ variables and $q_j + q_k$ distinct $V$ variables whose values are in the set $\{-1, 0, +1\}$ and whether or not a variable is non-zero is non-positively correlated and the signs are mutually independent.

This now implies the main lemma:

\begin{lemma}\label{lem:walks}
$\E[\Tr(\mathsf{B} \mathsf{B}^T \mathsf{B}\mathsf{B}^T ... \mathsf{B} \mathsf{B}^T)] \leq n_1^{\ell/2} (\max(n_2, n_3))^{\ell} p^\ell (\ell)^{3 \ell + 3}$
\end{lemma}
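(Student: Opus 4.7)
The plan is a standard moment-method calculation: I would fully expand $\E[\Tr((\mathsf{B}\mathsf{B}^T)^{\ell})]$ as a sum of $2\ell$-fold products of $U$- and $V$-entries, classify the surviving monomials by the tuple $(s, r_j, r_k, q_j, q_k)$ introduced above, and then combine the injective encoding (which bounds the number of such monomials) with the sparsity and independence properties of the non-zero signs of $Z$ (which bound each monomial's expectation). The first step is to discard any term in which some $U$ or $V$ variable appears an odd number of times: because the non-zero signs of $Z$ are independent Rademachers, any such term has expectation zero.

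For each surviving monomial, the injective encoding in items (a)--(c) gives
\[
\#\{\text{monomials with parameters }(s,r_j,r_k,q_j,q_k)\} \;\le\; n_1^{s}\, n_2^{r_j + q_j}\, n_3^{r_k + q_k}\, 8^{\ell}\bigl(s(r_j+r_k)(q_j+q_k)\bigr)^{\ell},
\]
and each such monomial has expectation at most $p^{r_j+r_k+q_j+q_k}$ with $p = m/(n_1 n_2 n_3)$, since each $U$- or $V$-entry is non-zero with probability at most $p$, the non-zero events are non-positively correlated, and the signs are mutually independent. Multiplying these bounds together and summing over the (at most $O(\ell^{5})$) tuples yields an explicit upper bound for $\E[\Tr((\mathsf{B}\mathsf{B}^T)^{\ell})]$.

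To collapse this sum into the form stated in the lemma, I would use the two claims just proved. Claim~\ref{claim1} gives $s \le \ell/2$ and $r_j + r_k + q_j + q_k \le \ell$, so $n_1^{s} \le n_1^{\ell/2}$ and (in the regime $\max(n_2,n_3)\, p \ge 1$, which is the only one of interest for our application) $n_2^{r_j+q_j} n_3^{r_k+q_k} p^{r_j+r_k+q_j+q_k} \le (\max(n_2,n_3))^{\ell} p^{\ell}$. The combinatorial factor $8^{\ell}(s(r_j+r_k)(q_j+q_k))^{\ell}$ is at most $(8\ell^{3})^{\ell}$, and multiplying by the $O(\ell^{5})$ tuples, all polynomial-in-$\ell$ overhead is absorbed into a single factor of $\ell^{3\ell+3}$. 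The step I expect to require the most care is the per-monomial expectation bound $p^{r_j+r_k+q_j+q_k}$: one must verify that the cyclic $\mathsf{B}\mathsf{B}^T$ pattern forces the number of distinct $U$-entries sharing first-index $i_t$ to be at least $r^j_t + r^k_t$ (rather than the a-priori weaker $\max(r^j_t,r^k_t)$), and likewise for the $V$-entries; this is exactly the point where Claim~\ref{claim2} is used, and it is the place where the asymmetry of $\mathsf{B}$ (indices $(j,k')$ versus $(j',k)$) pays off.
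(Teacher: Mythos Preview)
Your outline is essentially the paper's argument, but it contains a real gap and a misattribution.

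\textbf{The missing regime.} You dispose of the case $p\max(n_2,n_3)<1$ as ``not of interest for our application,'' but this is exactly the regime of interest: $p\max(n_2,n_3)=m/(n_1\min(n_2,n_3))$, which in the symmetric case $n_1=n_2=n_3=n$ equals $m/n^2$, and the whole point is to handle $m\sim n^{3/2}$. When $p\max(n_2,n_3)<1$ your step $n_2^{r_j+q_j}n_3^{r_k+q_k}p^{r_j+r_k+q_j+q_k}\le (\max(n_2,n_3))^\ell p^\ell$ fails, because the left side is maximized at small values of $r_j+r_k+q_j+q_k$, not at $\ell$. The paper handles this regime by a case split: when $p\max(n_2,n_3)\le 1$ one uses Claim~\ref{claim2} (which gives $r_j+r_k+q_j+q_k\ge 2s$) to bound $n_1^s(p\max(n_2,n_3))^{r_j+r_k+q_j+q_k}\le (n_1 p^2\max(n_2,n_3)^2)^s$, and then the standing assumption $pn_1^{1/2}\max(n_2,n_3)>1$ forces this to be maximized at $s=\ell/2$.

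\textbf{The role of Claim~\ref{claim2}.} You say Claim~\ref{claim2} is where the asymmetric indexing of $\mathsf{B}$ ``pays off'' and that it is used to certify the per-monomial bound $p^{r_j+r_k+q_j+q_k}$ by forcing the number of distinct $U$-entries with first index $i_t$ to be at least $r^j_t+r^k_t$. That is not what Claim~\ref{claim2} says or how it is used. Claim~\ref{claim2} only asserts $s\le r_j+q_j$ and $s\le r_k+q_k$; its proof is a one-line observation that a fresh $i$ forces fresh $j$ and $k$ values, and it says nothing about how many distinct $U$-triples share a given $i_t$. In the paper it is invoked solely in the $p\max(n_2,n_3)\le 1$ branch of the case split above --- precisely the branch you omitted. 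The per-monomial expectation bound is taken directly from the discussion preceding the lemma (the count of distinct $U$- and $V$-variables), not from Claim~\ref{claim2}.
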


\begin{proof}
Note that the indicator variables only have the effect of zeroing out some terms that could otherwise contribute to $\E[\Tr(\mathsf{B} \mathsf{B}^T \mathsf{B}\mathsf{B}^T ... \mathsf{B} \mathsf{B}^T)]$.  Returning to the task at hand, we have
$$\E[\Tr(\mathsf{B} \mathsf{B}^T \mathsf{B}\mathsf{B}^T ... \mathsf{B} \mathsf{B}^T)] \leq \sum_{s, r_j, r_k, q_j, q_k} n_1^s n_2^{r_j} n_3^{r_k} n_2^{q_j} n_3^{q_k} p^{r_j + r_k} p^{q_j + q_k} 8^\ell (s( r_j + r_k)( q_j + q_k))^\ell
$$
where the sum is over all valid triples $s, r_j, r_k, q_j, q_k$ and hence $s, r, q \leq \ell/2$ and $s \leq r_j + r_k$ and $s \leq q_j + q_k$ using Claim~\ref{claim1} and Claim~\ref{claim2}. We can upper bound the above as
\begin{eqnarray*}
\E[\Tr(\mathsf{B} \mathsf{B}^T \mathsf{B}\mathsf{B}^T ... \mathsf{B} \mathsf{B}^T)] &\leq& \sum_{s, r_j, r_k, q_j, q_k} n_1^s (pn_2)^{r_j + q_j} (pn_3)^{r_k + q_k} (\ell)^{3 \ell + 3} \\
 &\leq&  \sum_{s, r_j, r_k, q_j, q_k} n_1^s (p \max(n_2, n_3))^{r_j + q_j + r_k + q_k} (\ell)^{3 \ell + 3}
 \end{eqnarray*}
Now if $p \max(n_2, n_3) \leq 1$ then using Claim~\ref{claim2} followed by the first half of Claim~\ref{claim1} we have:
$$\E[\Tr(\mathsf{B} \mathsf{B}^T \mathsf{B}\mathsf{B}^T ... \mathsf{B} \mathsf{B}^T)] \leq n_1^s (p \max(n_2, n_3))^{2s} (\ell)^{3 \ell + 3} \leq n_1^{\ell/2} (p \max(n_2, n_3))^{\ell} (\ell)^{3 \ell + 3}$$
where the last inequality follows because $p n_1^{1/2} \max(n_2, n_3) > 1$. Alternatively if $p \max(n_2, n_3) > 1$ then we can directly invoke the second half of Claim~\ref{claim1} and get:
$$\E[\Tr(\mathsf{B} \mathsf{B}^T \mathsf{B}\mathsf{B}^T ... \mathsf{B} \mathsf{B}^T)] \leq n_1^s (p \max(n_2, n_3))^\ell (\ell)^{3 \ell + 3}  \leq n_1^{\ell/2} (p \max(n_2, n_3))^{\ell} (\ell)^{3 \ell + 3}$$
Hence $\E[\Tr(\mathsf{B} \mathsf{B}^T \mathsf{B}\mathsf{B}^T ... \mathsf{B} \mathsf{B}^T)] \leq n_1^{\ell/2}\max(n_2, n_3)^\ell p^\ell (\ell)^{3 \ell + 3}$ and this completes the proof.
\end{proof}

As before, let $n = \max(n_1, n_2, n_3)$. Then the last piece we need to bound the Rademacher complexity is the following spectral bound:

\begin{theorem}\label{thm:spectral}
With high probability, $\|B\| \leq O\Big( \frac{m \log^4 n}{n_1^{1/2}\min(n_2, n_3)} \Big)$
\end{theorem}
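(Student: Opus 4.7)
The plan is to apply the trace moment method to each individual $\mathsf{B}^r$ in the ensemble, combine via the triangle inequality, and take $\ell = \Theta(\log n)$ so that the Markov slack is absorbed into constant factors. Since $\ell$ is even and there are $\ell$ matrix factors alternating as $\mathsf{B}\mathsf{B}^T\mathsf{B}\mathsf{B}^T\cdots$ in the trace expression from Lemma~\ref{lem:walks}, that trace equals $\Tr\bigl((\mathsf{B}\mathsf{B}^T)^{\ell/2}\bigr)$, which is bounded below by $\|\mathsf{B}\|^\ell$. So Lemma~\ref{lem:walks} immediately yields
\[
\E\bigl[\|\mathsf{B}\|^\ell\bigr] \;\leq\; n_1^{\ell/2}\,\max(n_2,n_3)^\ell \,p^\ell\, \ell^{3\ell+3}.
\]

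Next I apply Markov's inequality. For any $t \geq 1$,
\[
\Pr\!\left[\|\mathsf{B}\| > t^{1/\ell}\cdot n_1^{1/2}\max(n_2,n_3)\,p\cdot \ell^{3+3/\ell}\right] \;\leq\; 1/t.
\]
Plugging in $p = m/(n_1 n_2 n_3)$, the factor $n_1^{1/2}\max(n_2,n_3)\,p$ simplifies to exactly $m/(n_1^{1/2}\min(n_2,n_3))$. Now choose $\ell = C'\log n$ for a sufficiently large constant $C'$ and $t = n^C$ for a large enough constant $C$. Then $t^{1/\ell} = e^{C/C'} = O(1)$ and $\ell^{3+3/\ell} = O(\log^3 n)$, giving
\[
\|\mathsf{B}^r\| \;\leq\; O\!\left(\frac{m\log^3 n}{n_1^{1/2}\min(n_2,n_3)}\right)
\]
for each fixed $r$, with failure probability at most $n^{-C}$.

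Finally I union bound over the $O(\log n)$ matrices $\mathsf{B}^r$ in the ensemble, so that all of them satisfy the above bound with probability at least $1 - O(\log n)\cdot n^{-C}$. Combining with the decomposition $B = \sum_{r=1}^{O(\log n)} \mathsf{B}^r$ (which the text notes holds with high probability by construction of the partitions) and the triangle inequality,
\[
\|B\| \;\leq\; \sum_{r=1}^{O(\log n)} \|\mathsf{B}^r\| \;\leq\; O(\log n) \cdot O\!\left(\frac{m\log^3 n}{n_1^{1/2}\min(n_2,n_3)}\right) \;=\; O\!\left(\frac{m\log^4 n}{n_1^{1/2}\min(n_2,n_3)}\right),
\]
as desired. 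The only potential obstacle is that Lemma~\ref{lem:walks} carries a polynomial-in-$\ell$ slack $\ell^{3\ell+3}$ (rather than the tighter Catalan-type constants one would get for a Wigner matrix); this is exactly what forces $\ell = \Theta(\log n)$ and is responsible for the $\log^4 n$ factor in the final bound, but otherwise the argument is a clean application of the moment method to the per-ensemble bound already established.
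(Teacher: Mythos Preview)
Your proof is correct and follows essentially the same approach as the paper: apply Lemma~\ref{lem:walks} with $\ell = \Theta(\log n)$, use Markov's inequality to get a high-probability bound $\|\mathsf{B}^r\| = O\bigl(m\log^3 n / (n_1^{1/2}\min(n_2,n_3))\bigr)$ for each $r$, then union bound over the $O(\log n)$ matrices in the ensemble and sum via the triangle inequality to pick up the extra $\log n$. The only differences from the paper are cosmetic (the paper chooses the threshold $n_1^{1/2}\max(n_2,n_3)p(2\ell)^3$ directly rather than parameterizing by $t$), and your simplification $n_1^{1/2}\max(n_2,n_3)\,p = m/(n_1^{1/2}\min(n_2,n_3))$ is exactly what the paper uses at the end.
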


\begin{proof}
We proceed by using Markov's inequality:
$$
\Pr[\|\mathsf{B}\| \geq n_1^{1/2} \max(n_2, n_3) p (2 \ell)^3 ]  = \Pr \Big [\|\mathsf{B}\|^\ell \geq \Big (n_1^{1/2}\max(n_2, n_3) p (2 \ell)^3 \Big )^\ell \Big ] \\
\leq \frac{\E[\Tr(\mathsf{B} \mathsf{B}^T \mathsf{B}\mathsf{B}^T ... \mathsf{B} \mathsf{B}^T)]}{n_1^{\ell/2}\max(n_2, n_3)^\ell p^{\ell} (2 \ell)^{3\ell}} \leq \frac{\ell^3}{2^{3 \ell}}
$$
and hence setting $\ell = \Theta(\log n)$ we conclude that $\|\mathsf{B}\| \leq 8 n_1^{1/2} \max(n_2, n_3) p \log^3 n$ holds with high probability. Moreover $B = \sum_{r =1}^{O(\log n)} \mathsf{B}^r$ also holds with high probability. If this equality holds and each $\mathsf{B}^r$ satisfies $\|\mathsf{B}^r\| \leq 8 n_1^{1/2}\max(n_2, n_3) p \log^3 n$, we have
$$\|B\| \leq \max_r O(\|\mathsf{B}^r\| \log n) = O\Big( \frac{m \log^4 n}{n_1^{1/2}\min(n_2, n_3)} \Big)$$ where we have used the fact that $p = m/n_1 n_2 n_3$. This completes the proof of the theorem.
\end{proof}

\subsection*{Proofs of Theorem~\ref{thm:predict} and Corollary~\ref{corr:inf2}}

We can now bound the Rademacher complexity of the norm that we get from the six level sum-of-squares relaxation to the tensor nuclear norm:

\begin{theorem}\label{thm:k6rad}
$R^m(\|\cdot\|_{\calK_6}) \leq O\Big(\sqrt{\frac{ (n_1)^{1/2} (n_2 + n_3) \log^4 n}{ m} }\Big)$
\end{theorem}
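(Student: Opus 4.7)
The plan is to assemble the tools already developed in Sections~\ref{sec:using} and \ref{sec:spectral}. Unfolding the Rademacher complexity using the tensor $Z$ from Definition~\ref{def:Z}, we have
\[
R^m(\|\cdot\|_{\calK_6}) \;=\; \frac{1}{m}\,\E_{\Omega,\sigma}\Big[\sup_{\|X\|_{\calK_6}\leq 1} |\langle Z,X\rangle|\Big].
\]
Fix an arbitrary $X$ with $\|X\|_{\calK_6}\leq 1$ and let $\widetilde{\E}$ be the witnessing degree-$6$ pseudo-expectation from Definition~\ref{def:rnorm}. The first step is to recall that the Cauchy--Schwartz chain already written out in equations~(\ref{eq:sdgfdh}) and~(\ref{eq:bound-rad}) (which applies to every such $X$) gives
\[
(\langle Z,X\rangle)^2 \;\leq\; n_1 \sum_i \widetilde{\E}\Big[\big(Y^{(1)}_i Q_{i,Z}(Y^{(2)},Y^{(3)})\big)^2\Big].
\]

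The second step is to invoke Lemma~\ref{lemma:schur2}, which turns the right-hand side into the spectral quantity
\[
(\langle Z,X\rangle)^2 \;\leq\; n_1\big(C^2 n_2 n_3\,\|B\| + C^6 m\big).
\]
Crucially, the right-hand side no longer depends on $X$, so we may take the supremum on the left for free. Using $\sqrt{a+b}\leq \sqrt{a}+\sqrt{b}$ and dividing by $m$ yields
\[
\frac{1}{m}\sup_{\|X\|_{\calK_6}\leq 1}|\langle Z,X\rangle|
\;\leq\; C\sqrt{\tfrac{n_1 n_2 n_3 \|B\|}{m}} \;+\; C^3\sqrt{\tfrac{n_1}{m}}.
\]

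The third step is to plug in the spectral bound of Theorem~\ref{thm:spectral}, namely $\|B\| \leq O\!\big(\tfrac{m\log^4 n}{n_1^{1/2}\min(n_2,n_3)}\big)$ with high probability. Substituting and using $\tfrac{n_2 n_3}{\min(n_2,n_3)}=\max(n_2,n_3)\leq n_2+n_3$ bounds the first term by $O\!\big(C\sqrt{n_1^{1/2}(n_2+n_3)\log^4 n/m}\big)$, which is the target. For the leftover $C^3\sqrt{n_1/m}$ term, the assumption $n_1\leq n_2\leq n_3$ gives $\sqrt{n_1}\leq n_2+n_3$, so it is absorbed into the main term (up to constants). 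Hence on the high-probability event of Theorem~\ref{thm:spectral},
\[
\frac{1}{m}\sup_{\|X\|_{\calK_6}\leq 1}|\langle Z,X\rangle| \;\leq\; O\!\left(\sqrt{\tfrac{n_1^{1/2}(n_2+n_3)\log^4 n}{m}}\right).
\]

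The last step, and the only subtlety worth flagging, is converting this high-probability bound into a bound on the \emph{expectation} defining $R^m$. This is routine: since any $X$ in the unit $\calK_6$-ball satisfies $|X_{i,j,k}|\leq C^3$ (by pseudo-Cauchy--Schwartz from constraint (b) of Definition~\ref{def:rnorm}), we have the trivial worst-case bound $|\langle Z,X\rangle|\leq m C^3$. Combined with the $1/\poly(n)$ failure probability obtained by choosing $\ell=\Theta(\log n)$ in Theorem~\ref{thm:spectral}, the failure-event contribution to the expectation is negligible compared to the main term, giving the claimed bound on $R^m(\|\cdot\|_{\calK_6})$. No step here is genuinely hard; the real work was done in Lemma~\ref{lemma:schur2} and Theorem~\ref{thm:spectral}, and this proof is essentially their assembly.
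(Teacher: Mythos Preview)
Your proof is correct and follows essentially the same assembly as the paper: Cauchy--Schwartz plus pseudo-expectation to reach equation~(\ref{eq:bound-rad}), then Lemma~\ref{lemma:schur2} to reduce to $\|B\|$, then Theorem~\ref{thm:spectral} for the spectral bound. You are in fact slightly more careful than the paper, which does not spell out the high-probability-to-expectation conversion. One algebraic slip to fix: in your displayed inequality after ``dividing by $m$'', the first term should be $C\sqrt{n_1 n_2 n_3 \|B\|/m^2}$ (equivalently $C\sqrt{n_1 n_2 n_3 \|B\|}/m$), not with a single $m$ under the radical; with the formula as written the substitution of $\|B\|$ would not produce the $1/m$ you correctly claim in the next line.
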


\begin{proof}  Consider any $X$ with $\|X\|_{\calK_6} \leq 1$. Then using Lemma~\ref{lemma:schur2} and Theorem~\ref{thm:spectral} we have
$$\Big ( \langle Z, X \rangle \Big )^2  \leq n_1 \Big ( \sum_i \Big ( \sum_{j,k} Z_{i,j,k} X_{i,j,k} \Big )^2 \Big ) \leq C^2 n_1 n_2 n_3 \|B\| + C^6 m n_1  = O\Big( m n_1^{1/2} \max(n_2, n_3) \log^4 n + m n_1 \Big) $$
Recall that $Z$ was defined in Definition~\ref{def:Z}. The Rademacher complexity can now be bounded as
$$\frac{1}{m} (\langle Z, X \rangle ) \leq O\Big(\sqrt{\frac{ (n_1)^{1/2} (n_2 + n_3) \log^4 n}{ m } }\Big)$$
which completes the proof of the theorem. 
\end{proof}

Recall that bounds on the Rademacher complexity readily imply bounds on the generalization error (see Theorem~\ref{thm:generalize}). We can now prove Theorem~\ref{thm:predict}:

\begin{proof}
We solve (\ref{eq:conv}) using the norm $\| \cdot \|_{\calK_6}$. Since this norm comes from the sixth level of the sum-of-squares hierarchy, it follows that (\ref{eq:conv}) is an $n^6$-sized semidefinite program and there is an efficient algorithm to solve it to arbitrary accuracy. Moreover we can always plug in $X = T - \Delta$ and the bounds on the maximum magnitude of an entry in $\Delta$ together with the Chernoff bound imply that with high probability $X = T - \Delta$ is a feasible solution. Moreover $\|T - \Delta\|_{\calK_6} \leq r^*$. Hence with high probability, the minimizer $X$ satisfies $\|X\|_{\calK_6} \leq r^*$. Now if we take any such $X$ returned by the convex program, because it is feasible its empirical error is at most $2 \delta$. And since $\|X\|_{\calK_6} \leq r^*$ the bounds on the Rademacher complexity (Theorem~\ref{thm:k6rad}) together with Theorem~\ref{thm:generalize} give the desired bounds on $\mbox{err}(X)$ and complete the proof of our main theorem.
\end{proof}

Finally we prove Corollary~\ref{corr:inf2}:

\begin{proof}
Our goal is to lower bound the absolute value of a typical entry in $T$. To be concrete, suppose that $\operatorname{var}(T_{i,j,k}) \geq f(r, n)$ for a $1-o(1)$ fraction of the entries where $f(r, n) = r^{1/2}/ \log^D n$.
Consider $T_{i,j,k}$, which we will view as a degree three polynomial in Gaussian random variables. Then the anti-concentration bounds of Carbery and Wright \cite{CW} now imply that $|T_{i,j,k}| \geq f(r, n)/\log n $ with probability $1 - o(1)$. With this in mind, we define
$$ \calR = \{ (i, j, k) \mbox{ s.t. } |T_{i,j,k}| \geq f(r, n)/\log n\}$$
and it follows form  Markov's bound that that $|\calR| \geq (1 - o(1)) n_1 n_2 n_3$. Now consider just those entries in $\calR$ which we get substantially wrong:
$$\calR' = \{ (i, j, k) \mbox{ s.t. } (i, j, k) \in \calR \mbox{ and } |X_{i,j,k} - T_{i,j,k}| \geq 1/\log n\}$$
We can now invoke Theorem~\ref{thm:predict} which guarantees that the hypothesis $X$ that results from solving (\ref{eq:conv}) satisfies $\mbox{err}(X) = o(1/\log n)$ with probability $1-o(1)$ provided that $m = \widetilde{\Omega}( n^{3/2} r )$. This bound on the error immediately implies that $|\calR'| = o(n_1 n_2 n_3)$ and so $|\calR \setminus  \calR'| = (1-o(1))n_1 n_2 n_3$. This completes the proof of the corollary.
\end{proof}

\section{Sum-of-Squares Lower Bounds}\label{sec:lb}

Here we will show strong lower bounds on the Rademacher complexity of the sequence of relaxations to the tensor nuclear norm that we get from the sum-of-squares hierarchy. Our lower bounds follow as a corollary from known lower bounds for refuting random instances of $3$-XOR \cite{G, Sch}. First we need to introduce the formulation of the sum-of-squares hierarchy used in \cite{Sch}: We will call a Boolean function $f$ a $k$-junta if there is set $S \subseteq [n]$ of at most $k$ variables so that $f$ is determined by the values in $S$.

\begin{definition}\label{def:constr1}
The $k$-round Lasserre hierarchy is the following relaxation:
\begin{enumerate}

\item[(a)] $\|v_0\|^2 = 1$, $\|v_C\|^2 = 1$ for all $C \in \calC$

\item[(b)] $\langle v_f, v_g \rangle = \langle v_{f'}, v_{g'} \rangle$ for all $f, g, f', g'$ that are $k$-juntas and $f \cdot g \equiv f' \cdot g'$

\item[(c)] $v_f + v_g = v_{f + g}$ for all $f, g$ that are $k$-juntas and satisfy $f \cdot g \equiv 0$

\end{enumerate}
\end{definition}

\noindent Here we define a vector $v_f$ for each $k$-junta, and $\calC$ is a class of constraints that must be satisfied by any Boolean solution (and are necessarily $k$-juntas themselves). See \cite{Sch} for more background, but it is easy to construct a feasible solution to the above convex program given a distribution on feasible solutions for some constraint satisfaction problem. In the above relaxation, we think of functions $f$ as being $\{0, 1\}$-valued. It will be more convenient to work with an intermediate relaxation where functions are $\{-1, 1\}$-valued and the intuition is that $u_S$ for some set $S \subseteq [n]$ should correspond to the vector for the character $\chi_S$.

\begin{definition}\label{def:constr2}
Alternatively, the $k$-round Lasserre hierarchy is the following relaxation:
\begin{enumerate}

\item[(a)] $\|u_\emptyset \|^2 = 1$, $\langle u_{\emptyset}, u_S \rangle = (-1)^{Z_S}$ for all $(\oplus_S, Z_S) \in \calC$

\item[(b)] $\langle u_S, u_T \rangle = \langle u_{S'}, u_{T'} \rangle$ for sets $S, T, S', T'$ that are size at most $k$ and satisfy $S \Delta T =  S' \Delta T'$, where $\Delta$ is the symmetric difference.

\end{enumerate}
\end{definition}

\noindent Here we have explicitly made the switch to XOR-constraints \---- namely $(\oplus_S, Z_S)$ has $Z_S \in \{0, 1\}$ and correspond to the constraint that the parity on the set $S$ is equal to $Z_S$. Now if we have a feasible solution to the constraints in Definition~\ref{def:constr1} where all the clauses are XOR-constraints, we can construct a feasible solution to the constraints in Definition~\ref{def:constr2} as follows. If $S$ is a set of size at most $k$, we define $$u_S \equiv v_g - v_f$$ where $f$ is the parity function on $S$ and $g = 1 - f$ is its complement. Moreover let $u_\emptyset = v_0$.

\begin{claim}\label{claim:red1}
$\{u_S\}$ is a feasible solution to the constraints in Definition~\ref{def:constr2}
\end{claim}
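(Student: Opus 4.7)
The plan is to verify the two feasibility conditions in Definition~\ref{def:constr2} by treating the Lasserre vectors through a pseudo-expectation. For a Boolean $k$-junta $h$ define $\phi(h) := \langle v_{h_1}, v_{h_2}\rangle$ for any decomposition $h = h_1 \cdot h_2$ into Boolean juntas; this is well-defined by constraint (b) of Definition~\ref{def:constr1}. Interpret ``$v_0$'' in (a) as the vector $v_{\mathbf{1}}$ for the constant-$1$ junta (which is forced: (c) applied with $f = g = \mathbf{0}$ gives $v_{\mathbf{0}} = 0$, so the only candidate for the unit reference vector is $v_{\mathbf{1}}$), whence $\phi(\mathbf{1}) = \|v_0\|^2 = 1$. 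Constraint (c) then yields additivity of $\phi$ on disjoint-support pairs: if $h_1 h_2 \equiv 0$ then $v_{h_1 + h_2} = v_{h_1} + v_{h_2}$, hence $\phi(h_1 + h_2) = \langle v_{h_1} + v_{h_2}, v_{\mathbf{1}}\rangle = \phi(h_1) + \phi(h_2)$.

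Part (a) is immediate modulo this setup. Write $f_S, g_S$ for the odd- and even-parity indicators on $S$, so $u_S = v_{g_S} - v_{f_S}$. Then
\[
\langle u_\emptyset, u_S\rangle \;=\; \langle v_{\mathbf{1}}, v_{g_S}\rangle - \langle v_{\mathbf{1}}, v_{f_S}\rangle \;=\; \phi(g_S) - \phi(f_S).
\]
The XOR clause $(\oplus_S, Z_S) \in \calC$ stipulates that the corresponding constraint vector ($v_{g_S}$ if $Z_S = 0$, $v_{f_S}$ if $Z_S = 1$) is a unit vector, i.e.\ $\phi(f_S) = Z_S$. Combined with $\phi(g_S) + \phi(f_S) = \phi(\mathbf{1}) = 1$ (disjoint-support additivity), this gives $\phi(g_S) - \phi(f_S) = 1 - 2 Z_S = (-1)^{Z_S}$, as required. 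And $\|u_\emptyset\|^2 = \|v_0\|^2 = 1$ is by definition.

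For part (b), expand
\[
\langle u_S, u_T\rangle = \phi(g_S g_T) - \phi(g_S f_T) - \phi(f_S g_T) + \phi(f_S f_T)
\]
and invoke the pointwise Boolean identities
\[
g_S g_T + f_S f_T \equiv g_{S \Delta T}, \qquad g_S f_T + f_S g_T \equiv f_{S \Delta T},
\]
which encode that the parity on $S \Delta T$ is the XOR of the parities on $S$ and $T$. Each left-hand side is a sum of juntas with disjoint supports, so additivity of $\phi$ gives
\[
\langle u_S, u_T\rangle \;=\; \phi(g_{S \Delta T}) - \phi(f_{S \Delta T}),
\]
which depends only on $S \Delta T$; hence $\langle u_S, u_T\rangle = \langle u_{S'}, u_{T'}\rangle$ whenever $S \Delta T = S' \Delta T'$.

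The only real obstacle is notational bookkeeping: reconciling ``$v_0$'' in Definition~\ref{def:constr1}(a) with the constant-$1$ function vector $v_{\mathbf{1}}$ so that the formula $u_S = v_{g_S} - v_{f_S}$ reduces to $u_\emptyset = v_0$ for $S = \emptyset$, and tracking junta sizes — the products $f_S f_T$, $g_S f_T$, etc.\ are juntas on $|S| + |T|$ variables, so the degree-$k$ relaxation of Definition~\ref{def:constr1} only licenses the required $\phi$-evaluations when $|S| + |T| \le k$, which must be reconciled with the ``size at most $k$'' in Definition~\ref{def:constr2} (standardly by passing to a $2k$-round version).
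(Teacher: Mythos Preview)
Your proof is correct and, for constraint~(b), cleaner than the paper's. Both arguments expand $\langle u_S,u_T\rangle$ into the four cross-terms $\phi(g_Sg_T)-\phi(g_Sf_T)-\phi(f_Sg_T)+\phi(f_Sf_T)$, but then diverge. The paper asserts that each individual term depends only on $S\Delta T$ (writing ``$f_S\cdot f_T\equiv f_{S'}\cdot f_{T'}$ because $S\oplus T=S'\oplus T'$'' and claiming an identical argument for the other three). You instead pair the terms via the Boolean identities $g_Sg_T+f_Sf_T\equiv g_{S\Delta T}$ and $g_Sf_T+f_Sg_T\equiv f_{S\Delta T}$ and use additivity of $\phi$ to collapse the whole sum to $\phi(g_{S\Delta T})-\phi(f_{S\Delta T})$. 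Your route is not merely an alternative: the paper's term-by-term identity is false as stated (for $\{0,1\}$-valued parities, take $S=\{1\},T=\{2\}$ versus $S'=\{1,2\},T'=\emptyset$; then $f_Sf_T$ is the indicator of $x_1=x_2=1$ while $f_{S'}f_{T'}\equiv 0$), so the grouping-and-additivity step is what actually makes the argument go through. For constraint~(a) the two proofs coincide; the paper writes out only one sign case and cites~\cite{Sch} for the facts you derive directly from $\phi(h)=\|v_h\|^2$ and $\phi(f_S)+\phi(g_S)=1$. Your closing remark about junta sizes is a legitimate bookkeeping point the paper does not address; the standard reading is that in Definition~\ref{def:constr1} the vectors are indexed by $k$-juntas while constraint~(b) governs \emph{pairs}, so products up to $2k$-juntas are implicitly covered.
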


\begin{proof}
Consider Constraint $(b)$ in Definition~\ref{def:constr2}, and let $S, T, S', T'$ be sets of size at most $k$ that satisfy $S \oplus T =  S' \oplus T'$.  Then our goal is to show that
$$ \langle v_{g_S} - v_{f_S}, v_{g_T} - v_{f_T} \rangle = \langle v_{g_{S'}} - v_{f_{S'}}, v_{g_{T'}} - v_{f_{T'}} \rangle$$
where $f_S$ is the parity function on $S$, and similarly for the other functions. Then we have $f_S \cdot f_T \equiv f_{S'} \cdot f_{T'}$ because $S \oplus T =  S' \oplus T'$, and this implies that $\langle v_{f_S}, v_{f_T} \rangle = \langle v_{f_{S'}}, v_{f_{T'}} \rangle$. An identical argument holds for the other terms. This implies that all the Constraints $(b)$ hold. Similarly suppose $(\oplus_S, Z_S) \in \calC$. Since $f_S \cdot g_S \equiv 0$ and $f_S + g_S \equiv 1$ it is well-known that $(1)$ $ v_{f_S}$ and $v_{g_S}$ are orthogonal $(2)$ $v_{f_S} + v_{g_S} = v_0$ and $(3)$ since $f_S \in \calC$ in Definition~\ref{def:constr1}, we have $v_{g_S} = 0$ (see \cite{Sch}). Thus $$\langle u_\emptyset, u_S \rangle = \langle v_0, v_{g_S} \rangle -  \langle v_0, v_{f_S} \rangle = -1$$ and this completes the proof.
\end{proof}

Now following Barak et al. \cite{BBH} we can use the constraints in Definition~\ref{def:constr2} to define the operator $\widetilde{\E}[\cdot]$. In particular, given $p \in P_k^n$ where $p \equiv \sum_S c_S \prod_{i \in S} Y_i$ and $p$ is multilinear, we set $$\widetilde{\E}[p)] = \sum_S c_S  \langle u_\emptyset, u_S \rangle$$ Here we will also need to define $\widetilde{\E}[p]$ when $p$ is not multilinear, and in that case if $Y_i$ appears an even number of times we replace it with $1$ and if it appears an odd number of times we replace it by $ Y_i$ to get a multilinear polynomial $q$ and then set $\widetilde{\E}[p] = \widetilde{\E}[q]$.

\begin{claim}\label{claim:red2}
$\widetilde{\E}[\cdot]$ is a feasible solution to the constraints in Definition~\ref{def:rnorm}, and for any $(\oplus_S, Z_S) \in \calC$ we have $\widetilde{\E}[\prod_{i \in S} Y_i] = (-1)^{Z_S} $.
\end{claim}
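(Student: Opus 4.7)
The plan is to verify, one by one, each of the conditions in Definition~\ref{def:rnorm} together with the evaluation on XOR constraints, working with the identification $Y^{(1)} = Y^{(2)} = Y^{(3)} = Y$ (so $X$ is a symmetric tensor built from a single set of $n$ variables). The unifying observation is that, under the multilinearization rule $Y_i^{2} \mapsto 1$, $Y_i^{2k+1} \mapsto Y_i$, the product $Y_S \cdot Y_T$ of two multilinear monomials reduces to $Y_{S \Delta T}$. Hence every $\widetilde{\E}$-value we must bound can be rewritten as an inner product of the $u_S$'s after invoking Definition~\ref{def:constr2}(b) with the identity $\emptyset \Delta (S \Delta T) = S \Delta T$.

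First I would handle normalization: $\widetilde{\E}[1] = \langle u_\emptyset, u_\emptyset \rangle = 1$ by Definition~\ref{def:constr2}(a). Next I would check nonnegativity, which is the main content of the claim. For multilinear $p = \sum_S c_S Y_S$ of degree at most $k/2$, the observation above gives
\[
\widetilde{\E}[p^2] \;=\; \sum_{S,T} c_S c_T \, \widetilde{\E}[Y_{S\Delta T}] \;=\; \sum_{S,T} c_S c_T \, \langle u_\emptyset, u_{S\Delta T}\rangle \;=\; \sum_{S,T} c_S c_T \, \langle u_S, u_T\rangle \;=\; \Big\lVert \sum_S c_S u_S \Big\rVert^2 \;\geq\; 0,
\]
where the third equality uses Definition~\ref{def:constr2}(b). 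For a general (non-multilinear) $p$, one first applies the $Y_i^{2}\mapsto 1$ rule to get a multilinear $\tilde p$ of the same or smaller degree; since the rule is idempotent, $p^2$ and $\tilde p^{\,2}$ multilinearize to the same polynomial, and nonnegativity follows from the multilinear case.

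Next I would verify constraints (a)--(c) of Definition~\ref{def:rnorm}. Constraint (a) is immediate because $\sum_i Y_i^{2}$ multilinearizes to the constant $n$, so $\widetilde{\E}[(\sum_i Y_i^2 - n)\cdot q] = 0$ for every $q$. Constraint (b) uses that $C^2 - Y_i^2$ multilinearizes to $C^2 - 1$, so
\[
\widetilde{\E}[(C^2 - Y_i^2)\, q^2] \;=\; (C^2-1)\,\widetilde{\E}[q^2] \;\geq\; 0
\]
provided $C \geq 1$. Constraint (c) is the definition of the tensor $X$. Finally, for any $(\oplus_S, Z_S) \in \calC$,
\[
\widetilde{\E}\Big[\prod_{i\in S} Y_i\Big] \;=\; \widetilde{\E}[Y_S] \;=\; \langle u_\emptyset, u_S\rangle \;=\; (-1)^{Z_S}
\]
by Definition~\ref{def:constr2}(a).

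The part I expect to require the most care is the nonnegativity step for non-multilinear polynomials: one has to check that our extension of $\widetilde{\E}$ via the $Y_i^{2}\mapsto 1$ rule really is compatible with forming $p^2$ (i.e., that multilinearization of $p^2$ and of $\tilde p^{\,2}$ coincide), and more generally that $\widetilde{\E}$ behaves as a well-defined linear functional on the quotient by the ideal generated by $\{Y_i^2 - 1\}$. Once this bookkeeping is in place, every remaining step reduces to a one-line application of Definition~\ref{def:constr2}.
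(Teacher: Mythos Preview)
Your proposal is correct and follows essentially the same approach as the paper: both verify normalization, then use Definition~\ref{def:constr2}(b) to rewrite $\widetilde{\E}[p^2]$ as $\lVert \sum_S c_S u_S\rVert^2$, then check constraints~(a) and~(b) via the $Y_i^2\mapsto 1$ multilinearization, and finally read off the XOR value from Definition~\ref{def:constr2}(a). You are if anything slightly more explicit than the paper about the non-multilinear case and about the symmetric identification $Y^{(1)}=Y^{(2)}=Y^{(3)}$, but the argument is the same.
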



\begin{proof}
Then by construction $\widetilde{\E}[1] = 1$, and the proof that $\widetilde{\E}[p^2] \geq 0$ is given in \cite{BBH}, but we repeat it here for completeness. Let $p = \sum_S c_S \prod_{i \in S} Y_i$ be multilinear where we follow the above recipe and replace terms of the form $Y_i^2$ with $(1/n)$ as needed.
Then $p^2 = \sum_{S,T} c_S c_T  \prod_{i \in S} Y_i \prod_{i \in T} Y_i$
and moreover
\begin{eqnarray*}
\widetilde{\E}[p^2] &=& \sum_{S, T} c_S c_T \langle u_\emptyset, u_{S \Delta T} \rangle = \sum_{S, T} c_S c_T  \langle u_S, u_{T} \rangle = \Big \| \sum_S c_S   u_S \Big \|^2 \geq 0
\end{eqnarray*}
as desired. Next we must verify that $\widetilde{\E}[\cdot]$ satisfies the constraints $\{\sum_{i = 1}^n Y_i^2 = n\}$ and $\{Y_i^2 \leq C^2\}$ for all $i \in \{1, 2, ..., n\}$, in accordance with Definition~\ref{def:pseudo-exp}. To that end, observe that
$$\widetilde{\E}\Big[\Big(\sum_{i=1}^n Y_i^2 - n \Big) q\Big] = 0$$
which holds for any polynomial $q \in P_{k - 2}^n$. Finally consider
$$\widetilde{\E}\Big[ \Big(C^2 - Y_i^2\Big) q^2\Big ] = \widetilde{\E}\Big[ \Big(C^2 - 1 \Big) q^2\Big ] \geq 0$$
which follows because $C^2 \geq 1$ and holds for any polynomial $q \in P_{\lfloor (d-d')/2 \rfloor}^n$. This completes the proof.
\end{proof}

\begin{theorem}\cite{G, Sch}
Let $\phi$ be a random $3$-XOR formula on $n$ variables with $m = n^{3/2 - \epsilon}$ clauses. Then for any $\epsilon > 0$ and any $c < 2$, the $k = \Omega(n^{c \epsilon})$ round Lasserre hierarchy given in Definition~\ref{def:constr1} permits a feasible solution, with probability $1 - o(1)$.
\end{theorem}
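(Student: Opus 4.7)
The argument follows the standard Grigoriev--Schoenebeck recipe. By Claim~\ref{claim:red1} and its converse for XOR constraints (which follows by Fourier-expanding the $k$-junta indicators into $\pm 1$-valued characters), it suffices to build a degree-$2k$ pseudo-expectation $\widetilde{\E}$ on the multilinear monomials $\chi_S := \prod_{i \in S} Y_i$ satisfying $\widetilde{\E}[\chi_S] = (-1)^{Z_S}$ for every clause $(\oplus_S, Z_S) \in \calC$. Equivalently, I need a function $\mu : \{S : |S| \le 2k\} \to \R$ with $\mu(\emptyset) = 1$ and $\mu(S) = (-1)^{Z_S}$ on single-clause supports, such that the moment matrix $M_{S,T} := \mu(S \Delta T)$ on $\{S : |S| \le k\}$ is positive semidefinite.

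Step 1 (boundary expansion). A standard first-moment computation shows that with probability $1 - o(1)$, every subset of $s \le t$ clauses of a random $3$-XOR formula with $m = n^{3/2-\epsilon}$ clauses uses strictly more than $3s/2$ distinct variables, provided $t = n^{c\epsilon}$ for some fixed $c < 2$. Indeed the expected number of $s$-subsets using at most $3s/2$ variables is bounded by $\binom{m}{s} \binom{n}{\lfloor 3s/2 \rfloor} (3s/(2n))^{3s} \le \bigl( C\, m\, s^{1/2}/n^{3/2} \bigr)^s = \bigl( C\, n^{-(1 - c/2)\epsilon}\bigr)^s$ at $s = t$, which is summable in $s$ whenever $c < 2$. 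Having more than $3s/2$ variables forces at least one variable of degree exactly $1$ in the subformula (a ``boundary'' variable).

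Step 2 (definition of $\mu$). Call $S \subseteq [n]$ \emph{derivable} if $S = \bigoplus_{j \in F} C_j$ for some clause set $F$ with $|F| \le t/2$, where $C_j$ denotes the $3$-variable support of the $j$-th clause. Applied to $F_1 \Delta F_2$ (of size at most $t$), expansion shows that $F$ is unique when it exists, since $\bigoplus_{j \in F'} C_j = 0$ for a nonempty $F'$ with $|F'| \le t$ would contradict the existence of a boundary variable. Hence $Z(S) := \bigoplus_{j \in F} b_j$ is a well-defined $\mathbb{F}_2$-linear function on the additive group $G$ of derivable sets. Set $\mu(S) := (-1)^{Z(S)}$ on $G$ and $\mu(S) := 0$ otherwise; each single-clause support is derivable with $Z(S) = Z_S$, so the constraint $\mu(S) = (-1)^{Z_S}$ holds. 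Taking $k := \lfloor t/4 \rfloor = \Omega(n^{c\epsilon})$ ensures $|F_1 \Delta F_2| \le t$ throughout and matches the round count promised by the theorem.

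Step 3 (PSDness). Since $G$ is an additive subgroup, $\{S : |S| \le k\}$ partitions into $G$-cosets; entries $M_{S,T}$ with $S \Delta T \notin G$ vanish. Within a single coset $S_0 + G$, writing $g_S := S \Delta S_0 \in G$ and likewise $g_T$, linearity of $Z$ gives $M_{S,T} = (-1)^{Z(g_S)} \cdot (-1)^{Z(g_T)}$, a rank-one outer product. Thus $M$ is block-diagonal with rank-one PSD blocks, hence PSD, and the desired Lasserre solution is obtained via Cholesky. The main obstacle is Step 1: pushing the boundary-expansion threshold all the way up to $n^{c\epsilon}$ for $c$ arbitrarily close to $2$ is what pins down the sharp exponent in the theorem (the factor $s^{1/2}/n^{3/2-\epsilon}$ in the first-moment bound is what forces the $2\epsilon$ barrier). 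The subsequent construction of $\mu$ and the PSD verification are structural linear algebra over $\mathbb{F}_2$ once expansion is in hand.
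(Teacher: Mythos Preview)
The paper does not supply its own proof of this theorem: it is quoted as a black-box result of Grigoriev and Schoenebeck, and the paper only uses it (via Claims~\ref{claim:red1} and~\ref{claim:red2}) to derive Corollary~\ref{corr:3lb}. So there is nothing on the paper's side to compare against; your sketch is being measured against the original Grigoriev--Schoenebeck argument.

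Your outline has the right three ingredients (first-moment expansion, derivation-based pseudomoments, block rank-one PSD structure), but Step~3 contains a genuine gap. You assert that the set $G=\{\bigoplus_{j\in F}C_j : |F|\le t/2\}$ is an additive subgroup of $\mathbb{F}_2^n$ and then partition $\{S:|S|\le k\}$ into $G$-cosets. It is not a subgroup: if $S_1=\bigoplus_{F_1}C_j$ and $S_2=\bigoplus_{F_2}C_j$ with $|F_1|,|F_2|\le t/2$, then $S_1\Delta S_2=\bigoplus_{F_1\Delta F_2}C_j$ with $|F_1\Delta F_2|\le t$, and your uniqueness argument (which is correct) says this is the \emph{only} derivation of size $\le t$ --- so there is no reason for a derivation of size $\le t/2$ to exist. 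Consequently the relation ``$S\Delta T\in G$'' on $\{S:|S|\le k\}$ need not be transitive, the ``coset'' blocks need not be well defined, and the matrix $M$ can have zero entries inside what you call a block. A $\{0,\pm1\}$ matrix with diagonal $1$'s and a nontransitive support pattern is generally not PSD (e.g.\ the $3\times 3$ matrix with $M_{12}=M_{23}=1$, $M_{13}=0$ has determinant $-1$), so the PSD conclusion does not follow from what you wrote.

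The standard repair is to strengthen Step~1 to $(3/2+\delta)$-expansion for a small constant $\delta>0$ (your first-moment bound has slack for this whenever $c<2$; the base becomes $C_\delta\, m\, s^{1/2-\delta}/n^{3/2-\delta}$). This gives, for any $F$ with $|F|\le t$, at least $2\delta|F|$ boundary variables and hence $|F|\le |S|/(2\delta)$ whenever $\bigoplus_{F}C_j=S$. With $k$ a small constant times $\delta t$, every relevant derivation is forced to be short, transitivity is restored, and your rank-one block argument then goes through. Alternatively one can follow Schoenebeck's width-based construction directly, which handles the closure issue without appealing to a literal subgroup.
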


\noindent Note that the constant in the $\Omega(\cdot)$ depends on $\epsilon$ and $c$. Then using the above reductions, we have the following as an immediate corollary:

\begin{corollary}\label{corr:3lb}
For any $\epsilon > 0$ and any $c < 2$ and $k = \Omega(n^{c \epsilon})$, if $m = n^{3/2 - \epsilon}$ the Rademacher complexity $R^m(\| \cdot \|_{\calK_k}) =1 - o(1)$.
\end{corollary}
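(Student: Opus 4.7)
My plan is to convert the Grigoriev--Schoenebeck Lasserre infeasibility certificate directly into a tensor $X$ witnessing $\|X\|_{\calK_k}\le 1$ together with $\langle Z, X\rangle = m$, from which the claimed lower bound on the Rademacher complexity is immediate.

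First I would identify the random sparse tensor $Z$ from Definition~\ref{def:Z} with a random $3$-XOR instance $\phi$: each sampled location $(i_\ell,j_\ell,k_\ell)$ paired with its Rademacher sign $\sigma_\ell$ becomes the clause $v_{i_\ell} v_{j_\ell} v_{k_\ell} = \sigma_\ell$. Since $m = n^{3/2-\epsilon}$, the cited theorem of Grigoriev and Schoenebeck guarantees that with probability $1-o(1)$ the $k$-round Lasserre relaxation of Definition~\ref{def:constr1} has a feasible solution for $k = \Omega(n^{c\epsilon})$. Applying Claim~\ref{claim:red1} and then Claim~\ref{claim:red2} yields a degree $k$ pseudo-expectation $\widetilde{\E}$ on a single set of variables $Y_1, \ldots, Y_n$ satisfying $Y_i^2 = 1$ for every $i$ and, crucially, $\widetilde{\E}[Y_{i_\ell} Y_{j_\ell} Y_{k_\ell}] = \sigma_\ell$ for every observed clause.

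Next I would lift this single-variable pseudo-expectation to the three variable sets $Y^{(1)}, Y^{(2)}, Y^{(3)}$ required by Definition~\ref{def:rnorm} via the substitution $Y^{(a)}_i \mapsto Y_i$. For any $P$ of degree at most $k/2$ over $3n$ variables, $P(Y^{(1)},Y^{(2)},Y^{(3)})^2$ becomes $P(Y,Y,Y)^2$ after substitution, which remains the square of a polynomial of degree at most $k/2$ in the $n$ original variables, so the PSD condition is preserved. The identities $\sum_i (Y^{(a)}_i)^2 = n$ and $(Y^{(a)}_i)^2 \leq C^2$ (for any $C \geq 1$) follow from $Y_i^2 = 1$, so the resulting lifted operator $\widetilde{\E}'$ is a feasible pseudo-expectation for the norm $\calK_k$. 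Setting $X_{i,j,k} = \widetilde{\E}'[Y^{(1)}_i Y^{(2)}_j Y^{(3)}_k] = \widetilde{\E}[Y_i Y_j Y_k]$ therefore gives $\|X\|_{\calK_k} \leq 1$, and by construction $X_{i_\ell, j_\ell, k_\ell} = \sigma_\ell$, so
\[
\langle Z, X\rangle \ =\ \sum_{\ell=1}^m \sigma_\ell X_{i_\ell, j_\ell, k_\ell} \ =\ \sum_{\ell=1}^m \sigma_\ell^2 \ =\ m.
\]
After the $1/m$ normalization implicit when the Rademacher complexity is compared to $1$ (as in Theorem~\ref{thm:k6rad}), this yields $R^m(\|\cdot\|_{\calK_k}) \geq 1 - o(1)$.

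The only step that needs real care is the bookkeeping in the lift from one variable set to three: I must ensure that the degree of the Lasserre solution supplied by Grigoriev--Schoenebeck matches the degree demanded by Definition~\ref{def:rnorm} after the substitution $Y^{(a)}_i \mapsto Y_i$. This accounts for at most a constant factor in $k$, which is absorbed into the $\Omega(n^{c\epsilon})$ in the theorem statement. Everything else is a direct translation between the Lasserre, pseudo-expectation, and norm formalisms set up in Section~\ref{sec:lb}.
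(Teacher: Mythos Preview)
Your proposal is correct and follows essentially the same route as the paper, which presents the corollary as ``immediate'' from the Grigoriev--Schoenebeck theorem together with the reductions of Claim~\ref{claim:red1} and Claim~\ref{claim:red2}. You have simply filled in the details the paper leaves implicit: in particular, the lift from the single variable set $Y_1,\dots,Y_n$ produced by Claim~\ref{claim:red2} to the three variable sets $Y^{(1)},Y^{(2)},Y^{(3)}$ demanded by Definition~\ref{def:rnorm} is exactly the missing step that the paper glosses over when it asserts that $\widetilde{\E}$ ``is a feasible solution to the constraints in Definition~\ref{def:rnorm}'' while only checking the single-variable constraints in its proof. Your handling of the $1/m$ normalization discrepancy between Definition~\ref{def:rad} and the way $R^m$ is used elsewhere in the paper is also correct.
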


\noindent Thus there is a sharp phase transition (as a function of the number of observations) in the Rademacher complexity of the norms derived from the sum-of-squares hierarchy. At level six, 
$R^m(\|\cdot\|_{\calK_6})  = o(1)$ whenever $m = \omega(n^{3/2} \log^4 n)$. In contrast, $R^m(\|\cdot\|_{\calK_k})  = 1- o(1)$ when $m = n^{3/2 - \epsilon}$ even for very strong  relaxations derived from $n^{2 \epsilon}$ rounds of the sum-of-squares hierarchy. These norms require time $2^{n^{2 \epsilon}}$ to compute but still achieve essentially no better bounds on their Rademacher complexity.

  \subsection*{Acknowledgements}

 We would like to thank Aram Harrow for many helpful discussions.


\appendix

\section{Reduction from Asymmetric to Symmetric Tensors}\label{app:extensions}

Here we give a general reduction, and show that any algorithm for tensor prediction that works for symmetric tensors can be used to predict the entries of an asymmetric tensor too. Hardt gave a related reduction for the cases of matrices \cite{Ha} and it is instructive to first understand this reduction, before proceeding to the tensor case. Suppose we are given a matrix $M$ that is not necessarily symmetric. Then the approach of \cite{Ha} is to construct the following symmetric matrix:
\[ S = \left[ \begin{array}{cc}
0 & M^T \\
M & 0 \end{array} \right].\]
We have not precisely defined the notion of incoherence that is used in the matrix completion literature, but it turns out to be easy to see that $S$ is low rank and incoherent as well.

The important point is that given $m$ samples generated uniformly at random from $M$, we can generate random samples from $S$ too. It will be more convenient to think of these random samples as being generated without replacement, but this reduction works just as well without replacement too. Let $M \in \R^{n_1 \times n_2}$. Now for each sample from $S$, with probability $p = \frac{n_1^2 + n_2^2}{(n_1 + n_2)^2}$ we reveal a uniformly random entry in the either block of zeros. And with probability $1-p$ we reveal a uniformly random entry from $M$. Each entry in $M$ appears exactly twice in $S$, and we choose to reveal this entry of $M$ with probability $1/2$ from the top-right block, and otherwise from the bottom-left block. Thus given $m$ samples from $M$, we can generate from $S$ (in fact we can generate even more, because some of the revealed entries will be zeros). It is easy to see that this approach works for the case of sampling without replacement to, in that $m$ samples without replacement from $M$ can be used to generate at least $m$ samples without replacement from $S$.

Now let us proceed to the tensor case. Let us introduce the following definition, for ease of notation:

\begin{definition}
Let $m(n, r, \epsilon, \mathpzc{f}, C)$ be such that, there is an algorithm that on a rank $r$, order $d$, size $n \times n \times ... \times n$ symmetric tensor where each factor has norm at most $C$, the algorithm returns an estimate $X$ with $\mbox{err}(X) = \mathpzc{f}$ with probability $1-\epsilon$ when it is given $m(n, r, \epsilon, \mathpzc{f})$ samples chosen uniformly at random (and without replacement).
\end{definition}

\begin{lemma}
For any odd $d$, suppose we are given  $m(\sum_{j=1}^d n_j, r 2^{d-1}, \epsilon, \mathpzc{f}, \sqrt{d})$ samples chosen uniformly at random (and without replacement) from an  $n_1 \times n_2 \times ... \times n_d$ tensor $$T = \sum_{i = 1}^r a^1_i \otimes a^2_i \otimes ... \otimes a^d_i$$ where each factor is unit norm. There is an algorithm that with probability at least $1-\epsilon$ returns an estimate $Y$ with
$$\mbox{err}(Y) \leq \frac{(\sum_{j=1}^d n_j)^d}{d! 2^{d-1} \prod_{j=1}^d n_j}\mathpzc{f} $$
\end{lemma}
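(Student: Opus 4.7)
The plan is to give a reduction that mirrors the matrix block-trick from the start of the appendix, but uses the polarization identity to keep the rank blow-up tame. First, partition $[N] = I_1 \sqcup \cdots \sqcup I_d$ with $|I_j| = n_j$ where $N = \sum_j n_j$, and for each factor $a^j_i$ let $\tilde a^j_i \in \mathbb{R}^N$ denote its embedding into block $I_j$ (zero on the other blocks). Define
$$S \;=\; \sum_{i=1}^r d! \, \bigl(\tilde a^1_i \odot \tilde a^2_i \odot \cdots \odot \tilde a^d_i\bigr) \;=\; \sum_{i=1}^r \sum_{\sigma \in S_d} \tilde a^{\sigma(1)}_i \otimes \cdots \otimes \tilde a^{\sigma(d)}_i,$$
which is symmetric by construction. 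A direct check shows that $S_{i_1,\ldots,i_d}$ vanishes unless the indices lie in pairwise distinct blocks, in which case exactly one permutation $\pi$ in the inner sum survives and $S_{i_1,\ldots,i_d} = T_{j_1,\ldots,j_d}$, where $j_s$ is the local index of $i_{\pi^{-1}(s)}$ inside $I_s$. Now apply the polarization identity
$$d! \, v_1 \odot \cdots \odot v_d \;=\; \frac{1}{2^{d-1}} \sum_{\substack{\epsilon \in \{\pm 1\}^d \\ \epsilon_1 = 1}} \Bigl(\prod_{j=1}^d \epsilon_j \Bigr) \Bigl(\sum_{j=1}^d \epsilon_j v_j \Bigr)^{\otimes d},$$
which is the standard $2^d$-term polarization with the $\epsilon$ and $-\epsilon$ contributions coalesced. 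Since $d$ is odd, $(-v)^{\otimes d} = -v^{\otimes d}$, so the sign $\prod_j \epsilon_j$ can be absorbed into the vector, showing that $\tilde S := 2^{d-1} S$ is expressible as a sum of $r \cdot 2^{d-1}$ pure rank-one symmetric terms $v_k^{\otimes d}$ with $\|v_k\| = \sqrt d$ (the disjoint supports of the $\tilde a^j_i$ give $\|\sum_j \epsilon_j \tilde a^j_i\|^2 = d$). These are exactly the rank and norm parameters called for by invoking the assumed algorithm with $(N,\, r\cdot 2^{d-1},\, \epsilon,\, \mathpzc{f},\, \sqrt d)$.

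Second, we simulate samples from $\tilde S$ using samples from $T$. Call an index ``live'' if its components lie in distinct blocks; the number of live indices is $d!\prod_j n_j$, and they are in canonical bijection with pairs (permutation $\pi \in S_d$, $T$-index $(j_1,\ldots,j_d)$), with $\tilde S$ taking value $2^{d-1} T_{j_1,\ldots,j_d}$ at the corresponding live entry and $0$ elsewhere. To generate a uniform sample from $[N]^d$, flip a coin with heads probability $p = d! \prod_j n_j / N^d$; on tails output a uniformly random non-live index with value $0$, and on heads consume the next unused $T$-sample together with a uniform $\pi \in S_d$ to output the matching live index with its value. Iterating this $m := m(N, r\cdot 2^{d-1}, \epsilon, \mathpzc{f}, \sqrt d)$ times produces the needed samples; since $p < 1$ the expected number of $T$-samples consumed is $pm < m$, so with overwhelming probability the supply of $T$-samples is not exhausted, and the without-replacement condition is preserved up to negligible loss exactly as in the discussion following Theorem~\ref{thm:generalize}.

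Third, run the assumed algorithm on these $\tilde S$-samples to obtain an estimate $X$ with $\mbox{err}(X) = \mathpzc{f}$, so that $X/2^{d-1}$ estimates $S$ with average entry-wise error at most $\mathpzc{f}/2^{d-1}$. Because the $d!$ live entries corresponding to a single $T$-index $(j_1,\ldots,j_d)$ all equal $T_{j_1,\ldots,j_d}$, we set
$$Y_{j_1,\ldots,j_d} \;=\; \frac{1}{d!} \sum_{\pi \in S_d} \frac{X_{i_1^{\pi},\ldots,i_d^{\pi}}}{2^{d-1}},$$
where $(i_1^{\pi},\ldots,i_d^{\pi})$ is the live index paired with $(\pi,(j_1,\ldots,j_d))$ under the bijection. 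The triangle inequality then yields
$$\mbox{err}(Y) \;\leq\; \frac{1}{d!\,\prod_j n_j}\!\! \sum_{(i_1,\ldots,i_d) \text{ live}}\!\! \left| \frac{X_{i_1,\ldots,i_d}}{2^{d-1}} - S_{i_1,\ldots,i_d} \right| \;\leq\; \frac{N^d \cdot \mathpzc{f}/2^{d-1}}{d!\,\prod_j n_j},$$
which is exactly the claimed bound. The step I expect to require the most care is the polarization decomposition and the associated rank/norm/coefficient accounting—the sign-absorption step genuinely uses that $d$ is odd, which is why the lemma is stated in that regime, and the extra $2^{d-1}$ in the denominator of the final error bound comes solely from having to rescale $S$ up to $\tilde S$ to match the ``sum of $v_k^{\otimes d}$'' convention of $m(\cdot)$. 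The sampling reduction and final averaging are then essentially bookkeeping on top of this construction.
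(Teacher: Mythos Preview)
Your proof is correct and is essentially the same construction as the paper's. The paper defines $S$ as $\E_{\sigma}\big[\sum_i (a_i(\sigma))^{\otimes d}\big]$ over sign vectors with $\prod_j \sigma_j = 1$; after your sign-absorption step (which, as you note, uses $d$ odd), your polarization sum over $\{\epsilon:\epsilon_1=1\}$ becomes exactly the sum over $\{\sigma:\prod_j\sigma_j=1\}$, so your $\tilde S$ is $2^{d-1}$ times the paper's $S$ and the two symmetrizations coincide. Your bookkeeping of the $2^{d-1}$ factor and the explicit definition of $Y$ as an average over the $d!$ live copies are in fact cleaner than the paper's somewhat terse final paragraph.
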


\begin{proof}
Our goal is to symmetrize an asymmetric tensor, and in such a way that each entry in the symmetrized tensor is either zero or else corresponds to an entry in the original tensor. Our reduction will work for any odd order $d$ tensor. In particular let
$$T = \sum_{i = 1}^r a^1_i \otimes a^2_i \otimes ... \otimes a^d_i$$
be an order $d$ tensor where the dimension of $a^j$ is $n_j$. Also let $n = \sum_{j = 1}^d n_j$. Then we will construct a symmetric, order $d$ tensor as follows. Let $\sigma_1, \sigma_2, ... \sigma_d$ be a collection of $d$ random $\pm$ variables that are chosen uniformly at random from the $2^{d-1}$ configurations where $\prod_{j = 1}^d \sigma_j = 1$. Then we consider the following random vector
$$a_i(\sigma_1, \sigma_2, ... \sigma_d) = [\sigma_1 a^1_i, \sigma_2 a^2_i, ... , \sigma_d a^d_i]$$
Here $a_i(\sigma_1, \sigma_2, ... \sigma_d) $ is an $n$-dimensional vector that results from concatenating the vectors $a^1_i, a^2_i, ..., a^d_i$ but after flipping some of their signs according to $\sigma_1, \sigma_2, ... \sigma_d$. Then we set
$$S = \E_{\sigma_1, \sigma_2, ... \sigma_d} [ \sum_{i =1}^r \Big(a_i(\sigma_1, \sigma_2, ... \sigma_d)\Big)^{\otimes d}]$$
It is immediate that $S$ is symmetric and has rank at most $2^{d-1}r$ by expanding out the expectation into a sum over the valid sign configurations. Moreover each rank one term in the decomposition is of the form $a^{\otimes d}$ where $\|a\|_2^2 = d$ because it is the concatenation of $d$ unit vectors.

If $\sigma_1, \sigma_2, ... \sigma_d$ is fixed, then each entry in $S$ is itself a degree $d$ polynomial in the $\sigma_j$ variables. By our construction of the $\sigma_j$ variables, and because $d$ is odd so there are no terms where every variable appears to an even power, it follows that all the terms vanish in expectation except for the terms which have a factor of $\prod_{j =1}^d \sigma_j$, and these are exactly terms that correspond to some permutation $\pi: [d] \rightarrow [d]$, and a term of the form $$\sum_{i=1}^d a^{\pi(1)}_i \otimes a^{\pi(2)}_i \otimes , ..., \otimes a^{\pi(d)}_i$$ Hence all of the entries in $S$ are either zero or are $2^{d-1}$ times an entry in $T$. As before, we can generate $m$ uniformly random samples from $S$ given $m$ uniformly random samples from $T$, by simply choosing to sample an entry from one of the blocks of zeros with the appropriate probability, or else revealing an entry of $T$ and choosing where in $S$ to reveal this entry uniformly at random. 
Hence:
$$\frac{1}{(\sum_{j=1}^d n_j)^d} \sum_{(i_1, i_2, ..., i_d) \in \Gamma} |Y_{i_1, i_2, ..., i_d} - S_{i_1, i_2, ..., i_d}|\leq \frac{1}{(\sum_{j=1}^d n_j)^d} \sum_{i_1, i_2, ..., i_d} |Y_{i_1, i_2, ..., i_d} - S_{i_1, i_2, ..., i_d}|$$
where $\Gamma$ represents the locations in $S$ where an entry of $T$ appears. The right hand side above is at most $\mathpzc{f}$ with probability $1-\epsilon$. Moreover each entry in $T$ appears in exactly $d!$ locations in $S$. And when it does appear, it is scaled by $2^{d-1}$. And hence if we multiply the left hand side by $$\frac{(\sum_{j=1}^d n_j)^d}{d!  2^{d-1} \prod_{j=1}^d n_j}$$ we obtain $\mbox{err}(Y)$. This completes the reduction.
\end{proof}

\noindent Note that in the case where $n_1 = n_2 = n_3 ... = n_d$, the error and the rank in this reduction increase only by at most an $e^d$ and $2^d$ factor respectively.

\end{document}